\newcommand{\norm}[1]{\left\| #1 \right\|}
\newcommand{\br}[1]{\left\lbrace #1 \right\rbrace}
\newcommand{\term}[1]{\left( #1 \right)}
\newcommand{\eps}{\varepsilon}
\newtheorem{theorem}{Theorem}
 \newtheorem{lemma}[theorem]{Lemma}
\newtheorem{definition}[theorem]{Definition}
\newtheorem{Definition}[theorem]{Definition}
 \newtheorem{remark}[theorem]{Remark}
 \newtheorem{claim}[theorem]{Claim}
\newcommand{\abs}[1]{\left| #1 \right|}
\newcommand{\REAL}{\mathbb{R}}
\newcommand{\RANGES}{\mathrm{ranges}}
\newcommand{\smartInit}{\textsc{Smart-Initialization}}
\newcommand{\smartPick}{\textsc{Smart-Pick}}
\newcommand{\rbf}[2][]{\ifthenelse{\isempty{#1}}{e^{-\norm{#2 - x}^2_2}}{e^{-\norm{#2 - #1}^2_2}}}
\newcommand{\say}[1]{``#1"}
\newcommand{\fourier}{\textsc{Random-Fourier-Features}}
\newcommand{\printfnsymbol}[1]{%
  \textsuperscript{\@fnsymbol{#1}}%
}
\title{Dataset Distillation Meets Provable Subset Selection}
\author{%
  {\href{https://scholar.google.com/citations?user=721xaz0AAAAJ&hl=en}{\color{blue}Murad Tukan \thanks{Equal Contribution}}} \\
  \color{magenta}DataHeroes\\
  \color{magenta}Israel \\
  \texttt{murad@dataheroes.ai} \\
  \And
  {\href{https://scholar.google.com/citations?user=6r72e-MAAAAJ&hl=en}{\color{blue}Alaa Maalouf \printfnsymbol{1}}}\\
  \color{magenta}CSAIL,  MIT\\
  \color{magenta}Cambridge, Massachusetts\\
  \texttt{alaam@mit.edu}
  \And
  {\href{https://scholar.google.com/citations?user=nZEtlZoAAAAJ&hl=en}{\color{blue}Margarita Osadchy}}\\
  %Computer Science Department,\\
  {\color{magenta}The University of Haifa}\\
  {\color{magenta} Haifa, Israel}\\
  \texttt{rita@cs.haifa.ac.il}
}
\begin{document}

\maketitle

\begin{abstract}
Deep learning has grown tremendously over recent years, yielding state-of-the-art results in various fields. However, training such models requires huge amounts of data, increasing the computational time and cost. 
To address this, dataset distillation was proposed to compress a large training dataset into a smaller synthetic one that retains its performance -- this is usually done by (1) uniformly initializing a synthetic set and (2) iteratively updating/learning this set according to a predefined loss by uniformly sampling instances from the full data.  
In this paper, we improve both phases of dataset distillation: (1) we present a provable, sampling-based approach for initializing the distilled set by identifying important and removing redundant points in the data, and (2) we further merge the idea of data subset selection with dataset distillation, by training the distilled set on ``important'' sampled points during the training procedure instead of randomly sampling the next batch.  
To do so, we define the notion of importance based on the relative contribution of instances with respect to two different loss functions, i.e., one for the initialization phase  (a kernel fitting function for kernel ridge regression and $K$-means based loss function for any other distillation method), and the relative cross-entropy loss (or any other predefined loss) function for the training phase.
Finally, we provide experimental results showing how our method can latch on to existing dataset distillation techniques and improve their performance.  
\end{abstract}

\begin{figure*}[t]
    \centering
    \includegraphics[width=\textwidth]{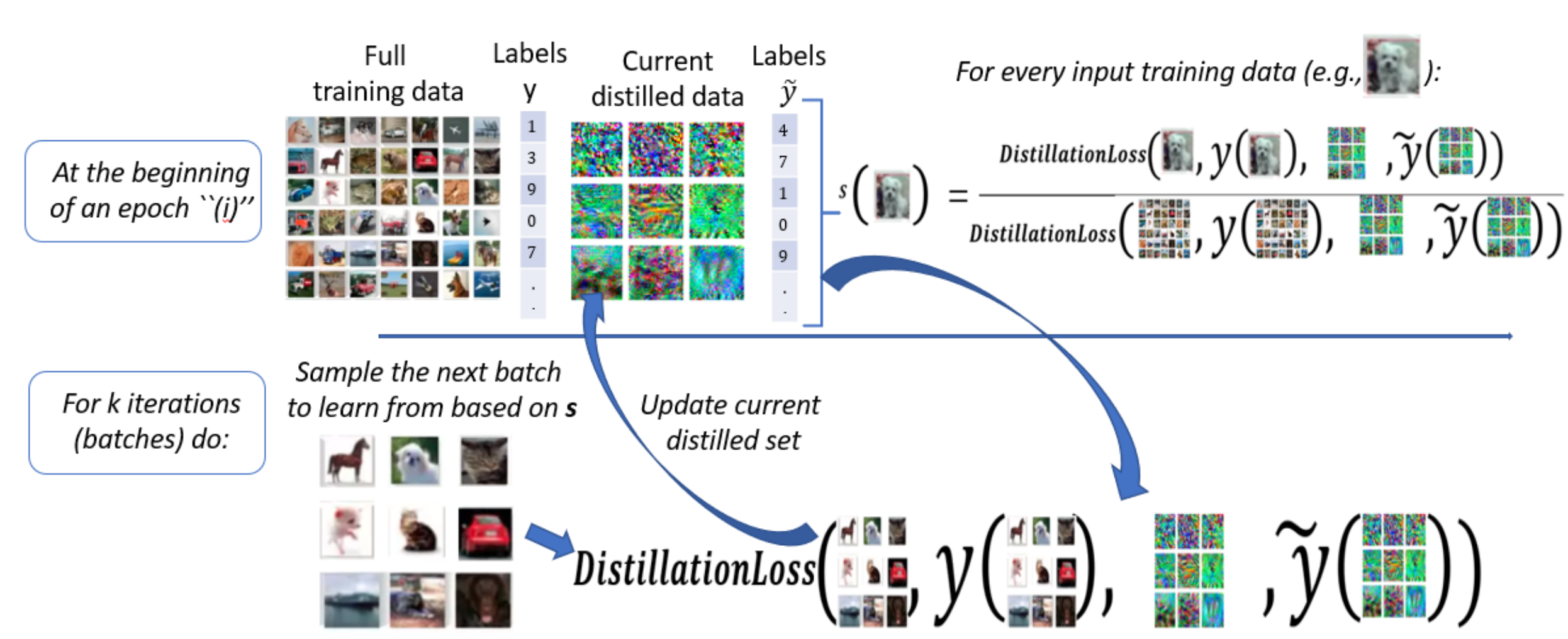}
    \caption{\textbf{Smart Learning. } At the beginning of a specific epoch $i$, we first compute the importance of every input data from the training data. The importance of an input data item aims to measure how well this specific example has been learned/represented using the current distilled data. We then sample the next batches according to this measure. $y$ corresponds to the true label and $\tilde{y}$ denotes the prediction of the model, trained on the distilled dataset; see Section~\ref{sec:smartpick} for more details. }
    \label{fig:smartpick}
\end{figure*}

\section{Introduction}
\label{sec:intro}
%\todo[inline]{Need to elaborate more}

\noindent Deep learning has advanced at an extraordinary rate in the previous decade, becoming the first option in many fields. This advancement is attributed to constantly increasing computational resources that stimulate cutting-edge algorithms to cope with large amounts of data. However, dealing with the infinite expansion of data under a constraint of limited computing power has been increasingly difficult~\cite{nguyen2021dataset}, especially during the training phase, where training a single deep learning model could last for weeks or even months. To that end, subset selection~\cite{mirzasoleiman2020coresets,killamsetty2021grad} and dataset distillation~\cite{loo2022efficient,nguyen2021dataset} methods have been proposed  to speed up the training process.

\textbf{Subset selection.} The idea is to select a representative subset from the entire data~\cite{killamsetty2021glister,tukan2023provable}, before each training epoch, followed by training the model on that subset. However, these methods fall short when such a subset is computed once for the entire training procedure.

\textbf{Dataset distillation.} More recent approaches focused on creating a small synthetic dataset (not necessarily a subset) such that training the model on this small data will approximate the training process on the entire big data. This technique is known as dataset distillation~\cite{loo2022efficient,wang2018dataset} and it has been gaining great interest in recent years. Notably, these synthetic datasets offer the advantage of employing continuous gradient-based optimization procedures rather than geometrical and combinatorial approaches and are not restricted to the collection of inputs and labels provided by the dataset, resulting in increased flexibility and better performance in practice.

Usually, dataset distillation methods start by \textbf{initializing at random} a synthetic dataset or by sampling \textbf{uniformly} a subset from the input data. Then, \textbf{learning the small synthetic} data is done iteratively by sampling batches \textbf{uniformly at random} from the entire data and minimizing a distillation objective, which depends on the algorithm, for example, matching gradients with respect to the data and the loss function used by the model~\cite{killamsetty2021grad}, using the neural tangent kernel of the model~\cite{nguyen2020dataset}, using a teacher/student model mechanism~\cite{cazenavette2022dataset}, or using neural network Gaussian process techniques~\cite{loo2022efficient}. 
Hence, dataset distillation, irrespective of the method used, includes the initialization step and the learning step. 

Then, the learning step itself can be viewed as a process of two functions: The first function \textbf{squeezes} the (current) information to be learned from the whole data set (in previous approaches, this corresponds to the random sampling at each iteration). The second function \textbf{injects}  the extracted information into the synthetic samples (this is the learning process). Thus, most data distillation methods rely heavily on uniform sampling in both the initialization step and in learning, where squeezing is performed by sampling random batches of real data. To this end, we raise the following questions regarding the effectiveness of the uniform sampling and provide the intuition behind them: 

\hypertarget{question1}{(1)} \textbf{Is uniform sampling sufficient for dataset distillation initialization?} 
In general, initialization has a crucial role in determining convergence of the training process of deep learning model~\cite{goodfellow2016deep}. We argue that the initialization of distillation is as important as model initializing for training, and has a similar effect on the convergence and final solution. While each instance is equally important from the point of view of the uniform sampler, the reality of such a matter is far from this, namely, some subsets are more informative than others. Using subset selection with theoretical guarantees for initializing the distilled set offers both practical and theoretical benefits: a) the learning starts from a better solution that is closer to a robust local minimum; b) the training process can only further improve the guarantees associated with the initial subset.

\hypertarget{question2}{(2)} \textbf{Should we apply uniform sampling in the squeezing part at each iteration?} As the learning process proceeds, more and more information from different data points is captured in the distilled set. Thus, the loss of the learning process with respect to these points gets lower and lower, and consequently, the gradients with respect to these points (specifically toward the end of the learning process) get lower as well. Eventually, uniform sampling creates a batch full of ``small norm'' gradients with very few samples affecting the gradient mean. This causes lower diversity of high gradient samples. To resolve this issue we suggest replacing uniform samples with ``important'' examples -- those that the distillation process has not yet encapsulated. In summary, we can obtain better distillation results by sampling a meaningful subset of points at each iteration and learning the synthetic data with respect to them.

\section{Key ideas}
\begin{figure*}[t]
    \centering
    \includegraphics[width=\textwidth]{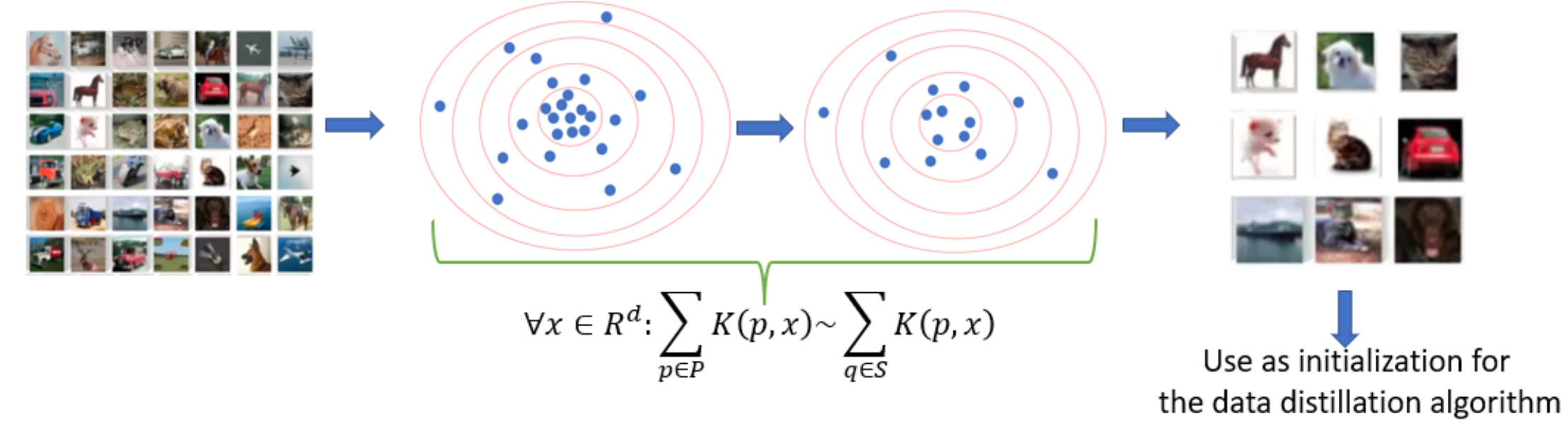}
    \caption{\textbf{Initialization:} Given an input dataset, e.g., CIFAR10, treat the data as a set $P$ of $n$ points in $\REAL^d$, then, apply a coreset construction algorithm (Algorithm~\ref{alg:initAlg}) on $P$ to obtain the coreset $S$ satisfying Theorem~\ref{thm:init}. Use corresponding input data (of $S$) as an initialization for the distillation algorithm. Note that the kernel $K$ we wish to estimate here is the corresponding kernel of the desired neural network we wish to distill data with.}
    \label{fig:galaxy}
\end{figure*}
Following the motivation described in the previous section, we introduce the two main ideas.  
\subsection{Provable initialization} In this work, we distinguish between two main panels of distillation techniques: neural tangent kernel (NTK) based methods (e.g., \cite{nguyen2020dataset}), and neural network-related loss minimization techniques (e.g.,~\cite{DSA}). For each of the above categories of distillation techniques, we provide a robust initializer with provable theoretical guarantees.

\textbf{Initialization for NTK-based distillation methods.} Recent work~\cite{lee2017deep,jacot2018neural} has shown that the learning procedure of neural networks replicates the behavior of Gaussian kernel as the width of the neural network tends to infinity. In other words, one can regard the neural network as a Gaussian process. This eases the inference and analysis of the learning process and it can be used for dataset distillation~\cite{nguyen2020dataset}. For formal recitation, the following is given.

\begin{claim}[ANNs represent Gaussian kernels~\cite{jacot2018neural}]
\label{clm:NTKGaussian}
In the infinite-width limit, artificial neural networks have a Gaussian distribution described by a kernel.
\end{claim}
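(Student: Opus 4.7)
The plan is to invoke the Central Limit Theorem on the sum of contributions from independently initialized hidden units. First, I would treat a single hidden layer network $f(x) = \frac{1}{\sqrt{N}} \sum_{i=1}^N a_i \sigma(w_i^\top x + b_i)$ with appropriate scaling, where the parameters $a_i, w_i, b_i$ are drawn i.i.d.\ from mean-zero distributions with finite variance. Because the $N$ summands are i.i.d.\ with bounded second moments, the multivariate CLT applies, so for any finite collection of inputs $x_1, \ldots, x_k$ the joint vector $(f(x_1), \ldots, f(x_k))$ converges in distribution to a mean-zero multivariate Gaussian as $N \to \infty$. Identifying the limiting covariance with $K(x, x') = \mathbb{E}[a^2]\,\mathbb{E}[\sigma(w^\top x + b)\,\sigma(w^\top x' + b)]$ singles out the kernel that governs the process.

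To promote this finite-dimensional convergence to a genuine Gaussian process over the whole input space, I would appeal to the Kolmogorov extension theorem: once every finite-dimensional marginal is Gaussian with a consistent covariance, a Gaussian process with covariance $K$ exists. For the multi-layer case, I would proceed by induction on depth. Assume the pre-activations at layer $\ell$ converge to a Gaussian process with kernel $K^{(\ell)}$; then at layer $\ell+1$ each pre-activation is again a normalized sum of i.i.d.\ terms (conditional on the previous layer), so a sequential infinite-width limit combined with CLT yields a Gaussian process with a recursively defined kernel $K^{(\ell+1)}(x,x') = \mathbb{E}[a^2]\,\mathbb{E}[\sigma(u)\sigma(u')]$ where $(u, u')$ is mean-zero Gaussian with covariance determined by $K^{(\ell)}$.

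The main obstacle is the inductive step for deep networks: one must justify the exchange of the sequential limits $N_1 \to \infty, N_2 \to \infty, \ldots$ and verify that the post-activation features at each layer retain enough conditional independence to re-apply CLT. This is typically handled by fixing the previous layer's weights, applying CLT on the current layer's randomness, and then passing to the outer limit, together with a uniform-integrability or tightness argument so that the limiting covariance agrees with the recursion. Once this measure-theoretic subtlety is addressed, the recursive kernel construction falls out immediately, yielding the claimed Gaussian-process description of the infinite-width network.
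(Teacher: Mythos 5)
The paper offers no proof of this claim at all: it is imported verbatim from the cited reference (Jacot et al.) and used as a black box to motivate the Gaussian-kernel coreset construction, so there is no in-paper argument to compare yours against. Taken on its own terms, your sketch is the standard and essentially correct route to the ``neural networks as Gaussian processes'' correspondence: CLT over i.i.d.\ hidden-unit contributions for the finite-dimensional marginals, Kolmogorov extension for the process, and induction on depth with the recursive covariance $K^{(\ell+1)}$. You also correctly identify the real technical obstruction, namely that the naive induction only handles sequential width limits and that the simultaneous limit requires a separate tightness/uniform-integrability argument (this is precisely the gap closed in the rigorous treatments of the deep NNGP limit). One condition worth stating explicitly is that the CLT step needs the summands $a_i\sigma(w_i^\top x + b_i)$ to have finite variance, which imposes a growth condition on $\sigma$ relative to the tails of the weight distribution.

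The one substantive mismatch is between what you prove and what the cited source (and the paper's surrounding prose) actually assert. Your argument establishes that the network \emph{at initialization} converges to a Gaussian process with the NNGP kernel. The paper, however, repeatedly says the kernel ``describes the learning process'' and attributes the claim to the neural tangent kernel paper; the NTK result is a different statement, namely that the empirical tangent kernel $\Theta(x,x') = \langle \nabla_\theta f(x), \nabla_\theta f(x')\rangle$ concentrates around a deterministic limit at initialization and remains asymptotically constant throughout gradient-descent training, so that training reduces to kernel regression with $\Theta$. The CLT-plus-induction argument gives you the first fact (Gaussianity at initialization) but not the second (constancy of the kernel during training), which requires controlling the change in $\Theta$ along the training trajectory, typically via a bound of order $O(1/\sqrt{N})$ on the parameter movement. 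If the claim is read literally (``have a Gaussian distribution described by a kernel''), your proof suffices; if it is read as the paper uses it (the kernel governs learning), a further argument is needed.
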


In other words, every neural network tends to behave like a Gaussian process from the perspective of learning -- the wider the width of the neural network, the more similar the network is to a Gaussian process. This means that the corresponding kernel (referred to in the literature by neural tangent kernel~\cite{jacot2018neural}) of the neural network (describes the learning process) describes a Gaussian distribution; for full details on how to compute this kernel, we refer the reader to~\cite{jacot2018neural}.   
With this in mind, Question~\hyperlink{question1} {(1)} for NTK-based methods can be restated as follows, \say{\emph{Can we find a subset of the data such that fitting the infinite-width neural network corresponding kernel on this subset will approximate fitting the same kernel on the entire data?}}. Note that such a subset is attainable (as we will see later) for the case of infinite-width models, where the associated kernel is Gaussian in nature~\cite{jacot2018neural}. In addition, using such a set in dataset distillation is beneficial, as the subset encapsulates the capabilities and caveats of the learning process of the model on the entire data.

\textbf{Initialization for neural network-related loss minimization distillation technique (\emph{NNLMDT}).} Recently~\cite{cui2022dc} questioned the effect of initialization of the dataset distillation techniques~\cite{zhao2020dataset,DSA,zhao2023dataset}, and observed that using $K$-center initialization~\cite{sener2017active} of the synthetic set improves the accuracy of the learned model trained on the distilled data. With this in mind, Question~\hyperlink{question1} {(1)} in the case of \emph{NNLMDTs} can be restated as follows, \say{\emph{Can we improve upon $K$-center initialization in the case of NNLMDT?}}.

\subsection{Data distillation meets subset selection} For small sizes of distilled datasets, the performance of data distillation techniques is far superior to that of subset selection techniques~\cite{yu2023dataset}. On the other hand, subset selection techniques identify the relevant subset of the training data by either finding ``hard'' examples with respect to the model's success in classification~\cite{paul2021deep} or by selecting a subset such that the model's gradient with respect to the entire data is approximated by the model's gradient with respect to the subset~\cite{killamsetty2021grad,tukan2023provable}, etc. Our second idea is to take the best from both worlds by merging both methodologies. 

We propose to replace the uniform sampling used in the distillation methods 
(e.g.,~\cite{loo2022efficient}) with randomized greedy subset selection to form batches in model training. This modification aims to draw model's attention to the harder instances (associated with the higher loss) to learn the distilled set. With this goal in mind, Question~\hyperlink{question2}{(2)} becomes 
\say{\textbf{How to determine which instances of the data should be selected for the  next update of the distilled data that yields more effective learning?}}

\subsection{Our contribution}
In this paper, we answer the questions raised in the previous section by suggesting the following:
\begin{enumerate}[label=(\roman*)]
    \item \textbf{NTK Provable-Init:} A provable data subset selection for better initialization of the NTK-based learning dataset distillation algorithms. 
    Our algorithm guarantees that fitting any Gaussian kernel on the initially selected subset (to be learned) approximates the fitting cost of using the same kernel on the entire data; see Theorem~\ref{thm:init}, and Algorithm~\ref{alg:initAlg}. 
    This implies that the learning procedure of models with equivalence to Gaussian kernels will behave the same on our subset (to some degree $\eps$) and on the entire data. To do so, we provide the first multiplicative factor approximation coreset for kernel ridge regression. For full details see Theorem~\ref{thm:init} and Remark~\ref{remark:init}, and for an illustrative depiction of this result see Figure~\ref{fig:galaxy}.
    While our application hinges upon approximating Gaussian kernels, our theoretical guarantees hold for a family of kernel functions, namely, positive-definite shift-invariant kernel functions. \label{contrib_1}

    \item \textbf{\emph{NNLMDT} Provable-Init:} We propose the usage of $K$-means coresets (specifically lightweight) as an initialization technique for non-NTK-based distillation techniques. We believe that this initialization technique better encapsulates the underlying structure of the data than the $K$-center problem; see Section~\ref{sec:kmeans_init_coreset}, and Figure~\ref{fig:smartInitResults} at Section~\ref{sec:exp}.\label{contrib_1.5}
    
    \item \textbf{Better sampling during training:} A provable data subset selection policy to replace the uniform sampling which is used for updating the distilled dataset at every iteration of the distillation process. The algorithm enjoys some provable guarantees with respect to the classification capabilities associated with the distilled dataset. Using this policy, we direct the focus of the distillation process toward instances that are not encapsulated in the scope of the distilled data; see Algorithm~\ref{alg:smartPick}, Theorem~\ref{thm:robin_heart} and Figure~\ref{fig:smartpick}.\label{contrib_2}

    \item \textbf{Distil-Boost:} Combining~\ref{contrib_1},~\ref{contrib_1.5}, and~\ref{contrib_2}, yields our proposed boosting mechanism for any sampling-based distillation techniques.
    
    \item Extensive experimental study and open source code of distillation algorithms on various datasets to show how our method improves the quality of the distilled dataset.
\end{enumerate}

\section{Method}
\label{sec:preliminaries}

We now present our notations and definitions and restate theorems from related work to fit our context.

\textbf{Notations. }For an integer $n>2$, we use $[n]$ to denote the set $\{1,2,\ldots,n\}$. A weighted set of points is a pair $(P,w)$, where $P \subset\REAL^d$ and $w : P \to [0, \infty)$ is a weight function. We use $\mathbf{1}_n \in \REAL^n$ to denote a vector of $n$ entries each equal to $1$. For any set $A \subseteq \REAL^d$, $\abs{A}$ denotes the cardinality of $A$.

As stated in Contributions~\ref{contrib_1} and~\ref{contrib_1.5}, we provide an initialization policy towards better learning the distilled set. 
To suggest such initializations, we turn to coresets. A coreset is a small weighted subset $(S,v)$ of the full dataset $P\subset \REAL^d$ that provably approximates the given loss function $f$ for every query in a given set of queries $X$. Coresets~\cite{bachem2018scalable,pmlr-v84-bachem18a,buadoiu2008optimal,maalouf2022fast,balcan2013distributed,maalouf2023autocoreset,pmlr-v97-braverman19a,curtain2019coresets} have been suggested for various machine learning problems as they were found to be advantageous in many aspects; e.g., for clustering~\cite{jubran2020sets,maalouf2021coresets,braverman2021coresets,tukan2022new}, for regression and classification problems~\cite{tukan2020coresets,NEURIPS2021_90fd4f88,maalouf2019fast}, sine wave fitting and $k$-segmentation problems~\cite{NIPS2014_bca82e41, maalouf2022coresets}, coresets for pruning neural networks~\cite{tukan2022pruning, mussay2021data,maalouf2022unified}, etc; see surveys~\cite{jubran2019introduction,feldman2020introduction}. In addition, coresets were utilized for marine application~\cite{DBLP:journals/corr/abs-2301-04216} and in the field of Robotics\cite{9981428}, specifically, for boosting the quality of sampling based path planners, e.g., RRT$^\ast$~\cite{5980479}. For formality, the following definition is given.
\begin{definition}[$\eps$-coreset]
\label{def:epsCore}
Let $(P,w)$ be a weighted set of $n$ points in $\REAL^d$, $X$ be a (probably infinite) set of queries, $\eps \in (0,1)$, and let $f : P \times X \to [0,\infty)$ be a loss function. An $\eps$-coreset for $(P,w,X,f)$ is a pair $(S,v)$ where $S \subseteq P$, $v:S\to [0,\infty)$ is a weight function, such that for every $x \in X$, $\abs{1 -  \frac{\sum_{q \in S} v(q)f(q,x)}{\sum_{p \in P} w(p) f(p,x)}} \leq \eps.$
\end{definition}

\subsection{Smart initialization via Gaussian kernel coresets for NTK-based distillation methods} 
Claim~\ref{clm:NTKGaussian} states that as neural networks tend to be wider, their NTK (the kernel which approximates their learning behavior) tends to be Gaussian. If we were to find a subset of the data that approximates a Gaussian kernel fitting problem, this subset would help improve the distillation process.
To this end, we define the kernel function $k: \REAL^d \times \REAL^d$, such that for every $x,y \in \REAL^d$, $k(x,y) = e^{-0.5\norm{x - y}^2},$ where the fitting function that our coreset aims to approximate, is the kernel density loss, i.e, for every  $x \in \REAL^d$: $\sum\limits_{p \in P} \frac{1}{n}k(p,x).$ 
The motivation behind this loss function is simple as it aims to encapsulate the probability distribution function of the data. Thus, finding a coreset for the Gaussian kernel density estimation gives a coreset that enables the preservation (to some extent) of the properties associated with the neural network tangent kernel concerning the training data. 
For the initialization of NTK-based methods, we aim at finding the best subset that ensures that the ratio between the kernel fitting cost of the entire data and that of the chosen subset is close to $1$. Hence, we aim to compute an $\eps$-coreset $(S,v)$ for the tuple $(P,w,\REAL^d,k)$. In the context of distillation initialization, $P$ here is the input training data (e.g., CIFAR10~\cite{krizhevsky2009learning}), and for every input $p\in P$, $w(p)=1/n$.

\textbf{Sensitivity sampling.} 
To compute an $\eps$-coreset, we utilize the sensitivity sampling framework~\cite{braverman2016new}. 
In short, the sensitivity of a point $p\in P$ is a number$ s(p) \geq \sup_{x \in X} \frac{w(p)f(p,x)}{\sum_{q \in P} w(q)f(q,x)}$, 
(where the sup is over every $x \in X$ such that the denominator is non-zero) that reflects the ``importance'' of this point with respect to the whole input set $P$ and the loss function at hand. Once we bound the sensitivities for every $p\in P$, we can sample points from $P$ according to their corresponding sensitivity bounds and re-weight the sampled points to obtain an $\eps$-coreset as in Definition~\ref{def:epsCore}.  
The size of the sample (coreset) is proportional to the sum of these bounds -- the tighter (smaller) these bounds, the smaller the coreset size. Formally speaking, Theorem~\ref{thm:coresetbraverman} is a simplified restatement of the generic Theorem 5.5 in~\cite{braverman2016new} which explains how to compute an $\varepsilon$-coreset for the input data with respect to a general loss function (query space) $f : P \times \REAL^d \to [0, \infty)$ (in our case $f:=k$).

\textbf{Sensitivity bounding.} To generate our coresets, we first need to upper-bound the sensitivities.  Recall that a kernel corresponds to an inner product between mappings of a pair of points in some feature space~\cite{theodoridis2006pattern}. Assume for now, that such mapping $z$ is explicitly given, i.e., $z: \REAL^d \to \REAL^c$ such that for any pair $x,y \in \REAL^d$, $k(x,y) = z(x)^Tz(y)$ (this assumption will be handled/removed later). Thus, the loss function we wish to approximate becomes $\sum_{p\in P} z(p)^Tz(x)$ for every $x \in \REAL^d$. We note that for the Gaussian kernel (and many other kernels), $z(x)^T z(y) = \abs{z(x)^T z(y)} \geq 0$ for any $x,y \in \REAL^d$. Hence to bound the sensitivity, we define  $\tilde{P} = \br{z(p)| p\in P}$ and bound $s(p)=\sup_{x\in \REAL^d}  \frac{z(p)^Tz(x)}{\sum_{q\in P}  z(q)^Tz(x)}$ using Lemma~\ref{lem:sens_bound} in Section~\ref{sec:proof_of_init_theorem} in the supplementary material.

\textbf{On constructing $z$.} The problem with this approach is that the feature space ($\REAL^c$) is rather intangible or hard to map to, while, the sensitivity bounding tool in Lemma~\ref{lem:sens_bound} requires the representation of these points in the feature space. 
To resolve this problem, we use a randomized approximated feature map -- the dot product in this approximated feature map aims to approximate the Gaussian kernel function. Specifically, we use Theorem~\ref{thm:rahimi} (see Section~\ref{sec:fourier} in the supplementary material) which states that there exists a feature space of some dimension $D \ll c$ such that the dot product of mapped features of two points in this space approximates the original Gaussian kernel which the model with the infinite width's kernel mimics. Hence, we first aim at computing a coreset for the dot product loss function in such spaces. This coreset will be used to compute the initial starting subset for any NTK-based sampling-based distillation techniques.

In summary, Algorithm~\ref{alg:initAlg} and Theorem~\ref{thm:init} describe the pseudo-code and theoretical guarantees associated with our kernel fitting problem for the NTK-based methods initialization.

\subsection{Smart initialization via $K$-means coresets for NNLMDT distillation methods}
\label{sec:kmeans_init_coreset}

First observe that the $K$-center problem which inspired the use of $K$-center initializer~\cite{sener2017active} involves the optimization problem of $\min_{\substack{C \subset \REAL^d \\ \abs{C} = K}} \max_{p \in P} \min_{c \in C} \norm{p - c}_2^2.$
While, such a problem is an NP-hard problem, the $K$-center initialization is deterministic and easy to compute and is a greedy algorithm that admits a multiplicative factor of $2$ to the optimal $K$-center problem~\cite{sener2017active}. 
We believe that while such an optimization problem encapsulates the internal structure of the data to some degree, the $K$-means problem can be exploited to gain higher accuracy than that of the $K$-center problem, and we believe that this boils down to the optimization problem that $K$-means involves, which is, $\min_{\substack{C \subset \REAL^d \\ \abs{C} = K}}\sum\limits_{p \in P} \min_{c \in C} \norm{p - c}_2^2$. Thus, we propose using (lightweight) $K$-means coreset~\cite{bachem2018scalable}, the lightweight coreset, that aims to approximate the $K$-means problem that involves the input data while admitting $(1+\eps)$-multiplicative approximation to any given $K$-points determining the clustering of the input data. Formally, for a given set $P\subset \REAL^d$, a $K$-means coreset is a pair $(S,u)$, where $S\subset P$ and $u:S\to \REAL$ such that for a any set $C$ of $K$ points in $\REAL^d$ 
$\abs{1 - \frac{\sum_{p \in S} \min_{c\in C} u(p)\norm{p - c}_2^2}{\sum_{p \in P} \min_{c\in C} \norm{p - c}_2^2} } \leq \eps.$ Note that the $K$-means problem differs from the $K$-center problem by the fact that instead of using the max operator over the distance between each point and its closest center (native to the $K$-center problem), the problem revolves around averaging the distance of points to their closest center.
We stress that by doing so, we better encapsulate the underlying structure of the data than the $K$-center problem.  Next, we provide our first main result: Algorithm~\ref{alg:initAlg} which includes initialization methods for both NTK-based distillation methods and \emph{NNLMDT}s.  Theorem~\ref{thm:init} provides the theoretical guarantees for Algorithm~\ref{alg:initAlg}.

\subsection{Provable initializer: Overview of Algorithm~\ref{alg:initAlg}}

Given a set $P$ of $n$ points in $\REAL^d$, a weight function $w : P \to [0,\infty)$, a sample size $m > 0$, and a kernel function $k$, 
for a sufficiently large sample size $m$, Algorithm~\ref{alg:initAlg} outputs a pair $(S, v)$ that is an $\eps$-coreset with respect to either the kernel fitting problem involving $k$, or the $K$-means clustering problem involving $P$ for any $K \geq 1$. Note that for \emph{NNLMDT}s distillation techniques, the function $k$ is set to an empty function. \textbf{In the case of NTK-based:} The heart of our algorithm lies in approximating the feature map function associated with the kernel function $k$ using~\cite{rahimi2007random}. We aim to approximate the structural properties associated with such space using coresets for the $\ell_1$-regression problem, specifically that of~\cite{tukan2020coresets}. This is done by ensuring that the dot product of the feature map is non-negative as a result of carefully choosing the approximation used by Theorem~\ref{thm:rahimi}; we refer the reader to the proof of Theorem~\ref{thm:init}. The conversion from kernel fitting to dot product of points in the feature space requires manipulation of the input data as presented at Line~\ref{algLine:manipulate}. We then compute the $f$-SVD of the new input data with respect to the $\ell_1$-regression problem followed by bounding the sensitivity of such points (Lines~\ref{algLine:l1svd}--\ref{algLine:sensBound}). \textbf{As for \emph{NNLMDT}:} we first compute the weighted average $\mu$ of $P$ as in Line~\ref{algLine:weightedAverage}, followed by setting the sensitivity to be the average between (i) the relative squared distance from $\mu$ of any point $p \in P$ with respect to the summation of squared distances from the points of $P$ to $\mu$ and (ii) the relative weight of a point with respect to the total weight of the points of $P$. We adopted this sensitivity bound from~\cite{bachem2018scalable} in order to be able to construct lightweight coresets which mainly aim towards the $K$-means clustering problem.  \textbf{Finally, } we have all the required ingredients to use Theorem~\ref{thm:coresetbraverman} (see Section~\ref{sec:coreset_tools} at the supplementary material) in order to obtain an $\eps$-coreset, i.e., we sample i.i.d $m$ points from $P$ based on their sensitivity bounds (see Line~\ref{algLine:donesamle}), followed by assigning a new weight for every sampled point at Line~\ref{algLine:weight}.

\begin{algorithm}[htb!]
\caption{$\smartInit(P, w, m, k)$\label{alg:initAlg}}
{\begin{tabbing}
\textbf{Input:} \quad\= A set $P \subseteq \REAL^{d}$ of $n$ points, a weight function $w : P \to [0, \infty)$, a sample size $m \geq 1$, 
\\\>and a kernel function $k$.\\
\textbf{Output:} \>A pair $(S,v)$ satisfying Theorem~\ref{thm:init}.
\end{tabbing}}

\If{$\mathbbm{1}\term{\texttt{Is-NTK-based-?}}$}{
Construct $z$ via Theorem~\ref{thm:rahimi} with respect to $k$ \label{algLine:Rahimi} \tcp{See Algorithm~\ref{alg:fourier} at the Appendix} 

Set $P^\prime = \br{z(p) \middle| p \in P}$ \label{algLine:manipulate}\\

Set $(U,\Sigma,V)$ to be the $\ell_1$-SVD of $\term{P^\prime,w,\abs{\cdot}}$\label{algLine:l1svd} \tcp{See Lemma~\ref{lem:sens_bound} at the Appendix}
}
\Else{
Set $\mu \in \REAL^d$ to be $\sum_{p \in P} \frac{w(p)}{\sum\limits_{q \in P} w(q)}  p$ \label{algLine:weightedAverage}\\
}

\For{every $p \in P$ \label{algLine:4}}
{
Set $s(p):= \begin{cases}
 w(p)\norm{U(z(p))}_1 & \text{NTK-based distillation techniques} \\
 \frac{w(p)}{2\sum\limits_{q \in P} w(q)} + \frac{w(p)\norm{p - \mu}_2^2}{\sum_{q \in P} 2w(q)\norm{q - \mu}_2^2} & \text{Otherwise} 
\end{cases}$ \label{algLine:sensBound}\\ 
}
Set $t := \sum_{p \in P} s(p)$ \label{algLine:6} \\

Pick an i.i.d sample $S$ of  
$m$ points from $P$, where each $p \in P$ is sampled with probability $\frac{s(p)}{t}$.  \label{algLine:donesamle}\\

Set $v: \REAL^d \to [0,\infty]$ to be a weight function such that for every $q \in S$, $v(q)=\frac{tw(q)}{s(q)\cdot m}$. \label{algLine:weight} \\
\Return $(S,v)$ \label{algLine:coresetDone}
\end{algorithm}

We now present the theoretical guarantees of Algorithm~\ref{alg:initAlg}.

\begin{restatable}[Guarantees of our smart initialization]{theorem}{init}
\label{thm:init}
Let $P \subseteq \REAL^d$ be a set of $n$ points, $K>0$ be an integer, $R$ be the diameter of $P$\footnote{For \emph{NNLMDT}s, $R$ can be as large as possible since it will not be used, whereas for NTK-based, it will be present in $D$.}, $X$ be a set of queries of diameter $3R$ containing $P$, i.e., $P \subseteq X$. Let $k : P \times X \to (0, \infty)$ be a Gaussian kernel such that $\min_{p \in P, x \in X} k(x,p) = \alpha\term{R}$, where $\alpha\term{R} > 0$, $\delta \in (0,1)$, and let $\eps \in (0, \alpha\term{R}]$. 
Finally, let $D$ be defined as in Theorem~\ref{thm:rahimi} with plugging $X := X$ and $k := k$ and $\eps := \eps$ into Theorem~\ref{thm:rahimi}. 
Let $m \in O\term{\frac{Dd}{\eps^2}\term{d\log{D} + \log{\frac{1}{\delta}}}}$ for NTK based distillation techniques, and $m \in O\term{dK\term{\log(K) + \log{\term{\frac{1}{\delta}}}}}$ for the \emph{NNLMDT}s. 
Let $(S,v)$ be the output of a call to $\smartInit(P, \frac{1}{n}\mathbf{1}_n,m,k)$. Then with probability at least $1-\delta$, for NTK-based distillation techniques, it holds that for every $x \in X$,
$$ 
\abs{\sum\limits_{q \in S} v(q) k(q,x) - \sum\limits_{p \in P} \frac{1}{n}k(p,x)} \leq \eps \term{1 + \sum\limits_{p \in P} \frac{1}{n}k(p,x)},
$$
while for \emph{NNLMDT}s, let $\mu = \sum_{p \in P} p/n$, it holds that for every set $C \subseteq \REAL^d$ of $K$ points
$$\abs{\sum\limits_{p \in P} \min_{c\in C} \norm{p - c}_2^2- \sum\limits_{p \in S} \min_{c\in C} v(p)\norm{p - c}_2^2 } \leq \frac{\eps}{2}  \sum\limits_{p \in P} \min_{c\in C} \norm{p - c}_2^2 + \frac{\eps}{2}  \sum\limits_{p \in P} \norm{p - \mu}_2^2.$$
\end{restatable}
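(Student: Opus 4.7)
The proof naturally splits into the two regimes covered by the statement. I would begin with the NTK-based case since it involves the less standard combination of tools. The plan is to compose three approximations: random Fourier features from Theorem~\ref{thm:rahimi}, an $\ell_1$-sensitivity bound via $\ell_1$-SVD (Lemma~\ref{lem:sens_bound}), and sensitivity sampling (Theorem~\ref{thm:coresetbraverman}). First, I would invoke Theorem~\ref{thm:rahimi} with parameters $X$ and $\eps$ to obtain the mapping $z : \REAL^d \to \REAL^D$ satisfying $\abs{z(x)^T z(y) - k(x,y)} \leq \eps$ uniformly for $x,y \in X$, chosen so that $z(x)^T z(y) \geq 0$ (this non-negativity is what lets us pass from an $\ell_1$ absolute-value loss to the actual kernel loss). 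Then I would treat $\tilde P = \{z(p) : p \in P\}$ as the point set and $\tilde X = \{z(x) : x \in X\}$ as the query set for the loss $f(\tilde p, \tilde x) = \abs{\tilde p^T \tilde x}$, and use the $\ell_1$-SVD of the weighted set $(\tilde P, w, \abs{\cdot})$ together with Lemma~\ref{lem:sens_bound} to see that $s(p) = w(p) \norm{U z(p)}_1$ is a valid sensitivity upper bound, with total sensitivity $t = O(D)$.

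Next, I would plug these sensitivity bounds into Theorem~\ref{thm:coresetbraverman}. The pseudo-dimension of the inner-product query class in $\REAL^D$ is $O(D)$, so the sample size $m \in O\bigl( \tfrac{Dd}{\eps^2}(d \log D + \log(1/\delta)) \bigr)$ is sufficient to guarantee, with probability at least $1-\delta$, that for every $x \in X$,
\[
\Bigl| \sum_{q \in S} v(q)\, z(q)^T z(x) \;-\; \sum_{p \in P} \tfrac{1}{n}\, z(p)^T z(x) \Bigr| \;\leq\; \eps \sum_{p \in P} \tfrac{1}{n}\, z(p)^T z(x).
\]
Finally, I would use the Fourier approximation bound three times (on each term involving $z(\cdot)^T z(\cdot)$ versus $k(\cdot,\cdot)$) together with the triangle inequality to translate this guarantee to the claimed bound for $k$. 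The lower bound $\alpha(R) \geq \eps$ ensures the Fourier error is absorbed without degrading the constants, and the additive slack $\eps$ on the right-hand side appears precisely because of these three $\eps$-perturbations.

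For the \emph{NNLMDT} case, the proof is a direct invocation of the lightweight-coreset analysis of~\cite{bachem2018scalable}. The sensitivity bound in Line~\ref{algLine:sensBound} is exactly the one from their construction, namely $s(p) = \tfrac{w(p)}{2\sum_q w(q)} + \tfrac{w(p)\norm{p-\mu}_2^2}{2 \sum_q w(q) \norm{q-\mu}_2^2}$, and its total sum is $1$, so the sample size $m \in O\bigl( dK(\log K + \log(1/\delta)) \bigr)$ matches the lightweight-coreset bound for $K$-means. The conclusion that $\abs{\mathrm{cost}(P,C) - \mathrm{cost}(S,v,C)} \leq \tfrac{\eps}{2} \mathrm{cost}(P,C) + \tfrac{\eps}{2} \sum_p \norm{p-\mu}_2^2$ for every candidate center set $C$ of size $K$ is exactly what their theorem delivers; here the additive second term is the unavoidable price of replacing a full multiplicative $K$-means coreset (whose sensitivities can be much larger) by the lightweight one.

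The main obstacle I anticipate is the bookkeeping in the NTK-based case: one has to verify that every $z(x)$ for $x \in X$ is in fact a legitimate query in the $\ell_1$-regression problem to which Lemma~\ref{lem:sens_bound} applies, and to ensure that the non-negativity of $z(x)^T z(y)$ coming from Theorem~\ref{thm:rahimi} is preserved throughout the derivation so that $\abs{z(p)^T z(x)} = z(p)^T z(x)$. The diameter condition (queries lie in a ball of radius $3R$ containing $P$) and the lower bound $\alpha(R)$ on the kernel are what make this uniform non-negativity and the additive-to-multiplicative conversion possible; I would handle this by carefully choosing the feature-map dimension $D$ in Theorem~\ref{thm:rahimi} precisely in terms of $R$ so that the Fourier error is dominated by $\alpha(R)$ on the whole query domain.
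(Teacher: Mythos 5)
Your proposal follows essentially the same route as the paper's proof: invoke Theorem~\ref{thm:rahimi} to get the non-negative approximate feature map (using $\eps \leq \alpha(R)$ for non-negativity), bound sensitivities via the $\ell_1$-SVD of Lemma~\ref{lem:sens_bound}, apply the sensitivity-sampling Theorem~\ref{thm:coresetbraverman} to the dot-product loss, and translate back to the kernel via the triangle inequality; the \emph{NNLMDT} case is likewise handled by directly citing the lightweight-coreset guarantee of~\cite{bachem2018scalable}. The argument is correct and matches the paper's decomposition step for step.
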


We refer the reader to Section~\ref{sec:extensions} for further extensions of this result.

\subsection{Subset selection policy for better distillation} \label{sec:smartpick}
In the learning stage, batches of real data are sampled from the full dataset to compute the distillation loss.
We propose sampling subsets of the training data that are either not in the scope of the distilled dataset or not fully covered in it. This will enhance the quality of the distilled dataset.
To sample the next batch wisely, we define the notion of \textbf{importance-based sampling strategy}. 
Let $P \subseteq \REAL^{d}$ be the full training data, and $y : P \to \mathcal{L}$ be it corresponding label function, $\tilde{P}$ be the current distilled dataset, and its corresponding labels $\tilde{y}$. Let $\phi$ be the loss function used by the dataset distillation technique for prediction during distillation. For each sample $p$ in the training set $P$, we assign a probability  $\mathbb{P} (p) := \frac{\phi(p, y(p),\tilde{P},\tilde{y})}{\sum_{q \in P} \phi(q, y(q),\tilde{P},\tilde{y})}$ (a.k.a the importance) equal to the ratio between the current value (this value changes during the training process) of the loss function used by the distillation method on this example and the total loss -- sum of losses across the whole  training set. For \emph{KIP}~\cite{nguyen2020dataset}, the loss function is the squared difference loss function on the predicted labels using kernel ridge regression model and their ground truth label, for \emph{RFAD}~\cite{loo2022efficient}, an x-entropy loss function is used to measure the distance between the predicted label and true label, and finally, for \emph{DC}~\cite{zhao2020dataset}, the loss function is the cross-entropy loss; see Figure~\ref{fig:smartpick} for illustration.

\begin{remark}[Weak coreset]
Our importance sampling can also be regarded as a weak coreset where $f$ would be the loss function used by the distillation technique, the query set $X$ in this context contains at each moment of the training procedure of the distillation technique, the current distilled data's properties, e.g., the kernel matrix in \emph{KIP} and \emph{RFAD}, and the network in \emph{DC}. Using the above parts, our importance sampling is parallel to generating a weak coreset that enjoys provable guarantees for a single query. For more details, see Theorem~\ref{thm:robin_heart} and its proof.
\end{remark}

We now present our subset selection for better distillation, referred to as Algorithm~\ref{alg:smartPick}.

\textbf{Overview of Algorithm~\ref{alg:smartPick}.} 
Algorithm~\ref{alg:smartPick} receives as input, a set $P$ of $n$ points in $\REAL^d$, a labeling function $y : P \to \mathcal{L}$ where $\mathcal{L}$ denotes the set of unique labels, a sample size $m > 0$, the current distilled dataset $\tilde{P}$ associated with its labels $\tilde{y}$ at some iteration $i\geq 1$ and a loss function $\phi$ used by some data distillation technique for prediction of training instances during the distillation process. 
First, at Line~\ref{liine:distribution}, we calculate for every point $p\in P$ the ratio between its loss and the sum of losses across all of the training data with respect to the predicted labels of the distillation process.
Then at Line~\ref{liine:sample}, we sample $m$ points according to these computed ratios and assign each sampled point a weight (Line~\ref{liine:weights}). These points form a batch for  updating the distilled set (via the corresponding dataset distillation algorithm).
Algorithm~\ref{alg:smartPick} guarantees a weak coreset with respect to the loss function used by the distillation method at hand. Figure~\ref{fig:smartpick} provides an intuitive illustration for the suggested subset selection sapling policy.

\begin{algorithm}[htb!]
\caption{$\smartPick\term{P, y, m, \phi,\tilde{P},\tilde{y}}$\label{alg:smartPick}}
{\begin{tabbing}
\textbf{Input:} \quad\= A set $P \subseteq \REAL^{d}$ of $n$ points, a label function $y : P \to \mathcal{L}$, a sample size $m \geq 1$,
\\\>current distilled dataset $\tilde{P}$
and its corresponding labels $\tilde{y}$
and a loss function $\phi$ \\\> used by the dataset distillation technique for prediction during distillation.\\
\textbf{Output:} \>A subset $S\subset P$ and $v : S \to [0,\infty)$.
\end{tabbing}}

\For{every $p$ in $P$}{
    $\mathbb{P} (p) := \frac{\phi(p, y(p),\tilde{P},\tilde{y})}{\sum\limits_{q \in P} \phi(q, y(q),\tilde{P},\tilde{y})}$ \tcp{Setting the sampling probability of $p$} \label{liine:distribution}
}
$S := $ Sample $m$ points from $P$ with via importance sampling involving $\mathbb{P}\term{\cdot}$\label{liine:sample} \\
Let $v : S \to [0,\infty)$ such that for every $p \in S$, $v(p) = \frac{1}{\mathbb{P}\term{p} \cdot m}$ \label{liine:weights} \\

\Return $\term{S,v}$ \label{algLine:smartcoresetDone}
\end{algorithm}

We now show the provable guarantees of Algorithm~\ref{alg:smartPick} and their implications.

\begin{restatable}{theorem}{robinheart}
\label{thm:robin_heart}
Let $P$ be a set of $n$ points, $y$ be a labeling function, $\eps,\delta\in(0,1)$, $\phi$ be a loss function used by the distillation technique, $m \in O(\log(1/\delta)/\eps^2)$ be a sample size, and let $(\tilde{P},\tilde{y})$ be the current distilled set (with its labels) at some iteration $i\geq0$. Let $\term{S, v}$ be the output of a call to $\smartPick\term{P, y, m, \phi,\tilde{P},\tilde{y}}$. 
Then, with probability $1-\delta$,
$ \abs{1 - \frac{\sum_{q \in S} v(q)\phi(q, y(q),\tilde{P},\tilde{y})}{\sum_{q \in P} \phi(q, y(q),\tilde{P},\tilde{y}) }} \leq \eps. $
\end{restatable}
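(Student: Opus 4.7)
The plan is to view $\sum_{q \in S} v(q)\phi(q, y(q), \tilde{P}, \tilde{y})$ as an importance-sampling Monte Carlo estimator of $T := \sum_{q \in P} \phi(q, y(q), \tilde{P}, \tilde{y})$ and invoke Hoeffding's inequality. By Line~\ref{liine:distribution} of Algorithm~\ref{alg:smartPick} the sampling distribution is exactly $\mathbb{P}(p) = \phi(p, y(p), \tilde{P}, \tilde{y})/T$, and by Line~\ref{liine:weights} the weight satisfies $v(p) = 1/(m\mathbb{P}(p))$. For the $m$ i.i.d.\ samples $q_1,\dots,q_m$, I would let $Z_i := v(q_i)\phi(q_i, y(q_i), \tilde{P}, \tilde{y})/T$; since the numerator of the ratio in the theorem equals $T \cdot \sum_{i=1}^m Z_i$ and the denominator equals $T$, the statement is equivalent to $\abs{\sum_i Z_i - 1} \le \eps$ with probability at least $1-\delta$.

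The next step is to verify that $\mathbb{E}[Z_i] = \sum_{p \in P} \mathbb{P}(p) \cdot \phi(p, y(p), \tilde{P}, \tilde{y})/(m\mathbb{P}(p)T) = 1/m$, so that $\mathbb{E}[\sum_i Z_i] = 1$, and that each $Z_i$ lies in a bounded deterministic range. Once this is in place, Hoeffding's inequality for sums of bounded i.i.d.\ random variables yields $\Pr\!\left[\abs{\sum_i Z_i - 1} > \eps\right] \le 2\exp(-\Omega(m\eps^2))$, and the hypothesis $m \in O(\log(1/\delta)/\eps^2)$ closes the probability.

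The only step of genuine substance is controlling the range of $Z_i$: substituting $\mathbb{P}(p) = \phi(p, y(p), \tilde{P}, \tilde{y})/T$ into $v(p) = 1/(m\mathbb{P}(p))$ gives $Z_i = 1/m$ deterministically, so the estimator is in fact exact and the concentration step is trivial (the bound holds for every $\eps > 0$ with probability $1$). I would still present the proof via the generic Hoeffding path, both to mirror the weak-coreset interpretation in the remark following the theorem and because it transparently handles the only real nuisance, namely points $p$ with $\phi(p, y(p), \tilde{P}, \tilde{y}) = 0$: these have zero sampling probability, so restricting attention to the support of $\mathbb{P}$ keeps $v(p)$ well-defined and leaves both sides of the ratio unchanged. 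The main ``obstacle'' is thus essentially notational rather than mathematical.
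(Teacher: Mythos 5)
Your proposal is correct, and it takes a genuinely more elementary route than the paper. The paper's proof goes through the generic sensitivity-sampling machinery (Theorem~\ref{thm:coresetbraverman}): it observes that with a single query --- the current distilled set $(\tilde{P},\tilde{y})$ --- the sensitivity of each point is exactly $\phi(p,y(p),\tilde{P},\tilde{y})/\sum_{q}\phi(q,y(q),\tilde{P},\tilde{y})$, so the total sensitivity is $1$ and the VC dimension is $1$, and the sample size $O(\log(1/\delta)/\eps^2)$ then falls out of the black-box theorem. You instead compute the estimator directly and notice the degeneracy: since the sampling probability is proportional to the very quantity being summed, every sampled point contributes $v(q)\phi(q,y(q),\tilde{P},\tilde{y}) = T/m$ where $T$ is the total loss, so the weighted sum equals $T$ deterministically and the ratio in the theorem is exactly $1$ with probability one --- a strictly stronger conclusion than the stated probabilistic $\eps$-bound. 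Your treatment of points with $\phi(p,y(p),\tilde{P},\tilde{y})=0$ (never sampled, so $v$ need only be defined on the support) correctly disposes of the one technical wrinkle. What the paper's route buys is uniformity with the weak-coreset framing of the surrounding text; what your route buys is the sharper observation that for this particular functional the guarantee is vacuous --- the concentration step and the dependence on $m$, $\eps$, $\delta$ do no work, and any nontrivial content of the sampling policy must come from evaluating the coreset on queries other than the one used to define the sampling probabilities. Either argument is a valid proof of the statement as written.
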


\textbf{The logic behind Algorithm~\ref{alg:smartPick}.} Notice that we could have decided to make $S$ (output of Algorithm~\ref{alg:smartPick}) contain only the hardest points to classify from the point of view of the distilled data. However, we wouldn't have encapsulated the classification capabilities of the model properly, as some of the easier examples can be harder for classification in the next iteration of the data distillation technique. Hence, by taking a coreset, we can (i) better encapsulate the distributional properties of the classification capabilities associated with the distilled data from a probabilistic point of view, and (ii) increase the chances of an item that is not that hard in this iteration being hard in the next iteration to be part of $S$.

\section{Experiments}
\label{sec:exp}

\textbf{System details.} Our algorithms were implemented in Python 3.9~\cite{python} using Numpy~\cite{harris2020array}, Scikit-Learn~\cite{scikitlearn}, and
Pytorch~\cite{NEURIPS2019_9015}. Tests were performed on NVIDIA DGX A100 servers with 8 NVIDIA A100 GPUs each, fast InfiniBand interconnect, and supporting infrastructure.

\textbf{Neural network and Datasets.} Throughout our experiments, we used the ConvNet network~\cite{szegedy2015going} for both training and evaluating synthetic dataset. This network is comprised of three $3\times3$ convolution layers, each followed by $2 \times 2$ average pooling and instance normalization. The hidden embedding size is set to $128$. As for the datasets, we have used MNIST~\cite{deng2012mnist}, Fashion-MNIST~\cite{xiao2017fashion}, SVHN~\cite{netzer2011reading}, CIFAR10~\cite{krizhevsky2009learning} and CIFAR100~\cite{krizhevsky2009learning}.

\textbf{Kernel ridge regression}
In what follows, we deploy both Algorithm~\ref{alg:initAlg} and Algorithm~\ref{alg:smartPick} in the context of boosting the results of KRRs for RFAD~\cite{loo2022efficient}. To ensure a fair comparison, we used the same setting of~\cite{loo2022efficient}, i.e., we used the regularized Zero Component Analysis (ZCA) preprocessing. As seen from Table~\ref{tab:Kernel_based}, our algorithms boosted the performance of RFAD in terms of KRR test accuracy where in this experiment,  across the whole table, our algorithms ensured a higher accuracy, furthermore, in some cases our boosting methodology achieves almost a $5\%$  accuracy improvement (see SVHN).

\begin{table*}[htb!]
    \centering
    \caption{Distillation of five datasets. Bold numbers indicate the best performance in the context of KRR, while underlined numbers indicate the best performances in the context of neural network training. $^*$ at the end of a methods name, indicates adapting our techniques with given method.}
    \begin{adjustbox}{width=\hsize}
    \centering
    \begin{tabular}{c|cccc|||cc}
    \hline 
    & & \multicolumn{3}{c|||}{NTK-based} & \multicolumn{2}{|c}{\emph{NNLMDT}s}\\
         & Img/Cls & KIP & RFAD & $\text{RFAD}^\ast$ & DC & DC$^{\ast}$  \\
         \hline
         \multirow{ 3}{*}{MNIST} & $1$ & $95.2 \pm 0.2$ & $96.7 \pm 0.2$ & $\mathbf{97.21\pm0.2}$ & $91.7 \pm 0.5$ & $\underline{92.01 \pm 0.4}$ \\
         & $10$ & $98.4 \pm 0.0$ & $99.0 \pm 0.1$ & $\mathbf{99.14 \pm 0.0}$ & $97.4 \pm 0.2$ & $\underline{97.83 \pm 0.15}$\\
         & $50$ & $99.1 \pm 0.0$ & $99.1 \pm 0.0$ & $\mathbf{99.26\pm 0.0}$ & $98.8 \pm 0.1$ & $\underline{98.89 \pm 0.1}$\\
         \hline
         \multirow{ 3}{*}{Fashion-MNIST} & $1$ & $78.9 \pm 0.2$ & $81.6 \pm 0.6$ & $\mathbf{83.51\pm0.4}$ & $\underline{70.5 \pm 0.6}$ & $70.43 \pm 0.64$\\
         & $10$ & $87.6 \pm 0.1$ & ${90.0 \pm 0.1}$ & $\mathbf{90.64\pm0.1}$ & $\underline{82.3 \pm 0.4}$ & $82.23 \pm 0.48$ \\
         & $50$ & $90.0 \pm 0.1$ & ${91.3 \pm 0.1}$ & $\mathbf{91.7\pm0.1}$ & $83.6 \pm 0.4$ & $\underline{85.65 \pm 0.25}$ \\
         \hline
         \multirow{ 3}{*}{SVHN} & $1$ & $48.1 \pm 0.7$ & $51.4 \pm 1.3$ & $\mathbf{56.96\pm0.9}$ & $\underline{31.2 \pm 1.4}$ & $28.81 \pm 1.31$\\
         & $10$ & $75.8 \pm 0.1$ & $77.2 \pm 0.3$ &  $\mathbf{78.59\pm0.2}$ & $76.1 \pm 0.6$ & $\underline{77.14 \pm 0.72}$\\
         & $50$ & $81.3 \pm 0.2$ & $81.8 \pm 0.2$ & $\mathbf{82.48\pm0.2}$ & $\underline{82.3 \pm 0.3}$ & $81.15 \pm 0.3$\\
         \hline
         \multirow{ 3}{*}{CIFAR10} & $1$ & $59.1 \pm 0.4$ & $61.1 \pm 0.7$ & $\mathbf{64.42\pm 0.4}$ & $\underline{28.3 \pm 0.5}$ & $28.13 \pm 0.54$\\
         & $10$ & $67.0 \pm 0.4$ & $73.1 \pm 0.1$ & $\mathbf{74.29\pm0.2}$ & $44.9 \pm 0.5$ & $\underline{46.41 \pm 0.69}$\\
         & $50$ & $71.7 \pm 0.2$ & $76.1 \pm 0.3$ & $\mathbf{77.01\pm 0.2}$ & $\underline{53.9 \pm 0.5}$ & $53.4 \pm 0.5$\\
         \hline
        \multirow{2}{*}{CIFAR100} & $1$ & $31.8 \pm 0.3$ & $36.0 \pm 0.4$ & $\mathbf{38.49\pm 0.2}$ & $12.8 \pm 0.3$ & $\underline{12.81 \pm 0.34}$\\
         & $10$ & $\mathbf{46.0 \pm 0.2}$ & $44.9 \pm 0.2$ & ${45.82\pm0.1}$ & $25.2 \pm 0.3$ & $\underline{26.5 \pm 0.34}$\\
         \hline
    \end{tabular}
    \end{adjustbox}
    \label{tab:Kernel_based}
\end{table*}

\textbf{Neural network training.} We have adopted Algorithms~\ref{alg:initAlg} and~\ref{alg:smartPick} to boost the performance of a distillation technique that belongs to the \emph{NNLMDT}s family, namely (DC)~\cite{zhao2020dataset}. Similarly to before, to ensure fairness across the board, we have adopted here the same settings of~\cite{zhao2020dataset} where we modified the implementation~\cite{zhao2020dataset} to include our smart initialization (Algorithm~\ref{alg:initAlg}) and our greedy sampling (Algorithm~\ref{alg:smartPick}), and we report the average results over $5$ trials. In addition, in this experiment, we used lightweight coresets for our initialization technique, while $\phi$ function was set to be the cross-entropy loss of the network that is being trained during the distillation process of the DC method. For most of the datasets, we were able to boost the performance of the DC technique using our algorithms, while in the rest, we got comparable results.

\textbf{Boosting the results of DM.} In this experiment, we have tested the quality of our initialization technique against the $K$-center initialization technique, where $K$ denotes the number of images taken from each class. We evaluated the quality of such techniques on the DM method~\cite{zhao2023dataset}. To this end, we ran the DM method $10$ times, each with a different number of iterations from the set $\br{2000i}_{i = 1}^{10}$. Figure~\ref{fig:smartInitResults} shows that our initialization technique is comparable to the $K$-center initialization technique for the case of having $K=1$, while for the case of $K=10$, the advantage of choosing our initialization technique is more clear. As for the case of $K=50$, our method is far superior to the $K$-center initialization techniques. 

\begin{figure}[t]
    \begin{subfigure}[b]{0.33\textwidth}
        \centering
    \includegraphics[width=\textwidth]{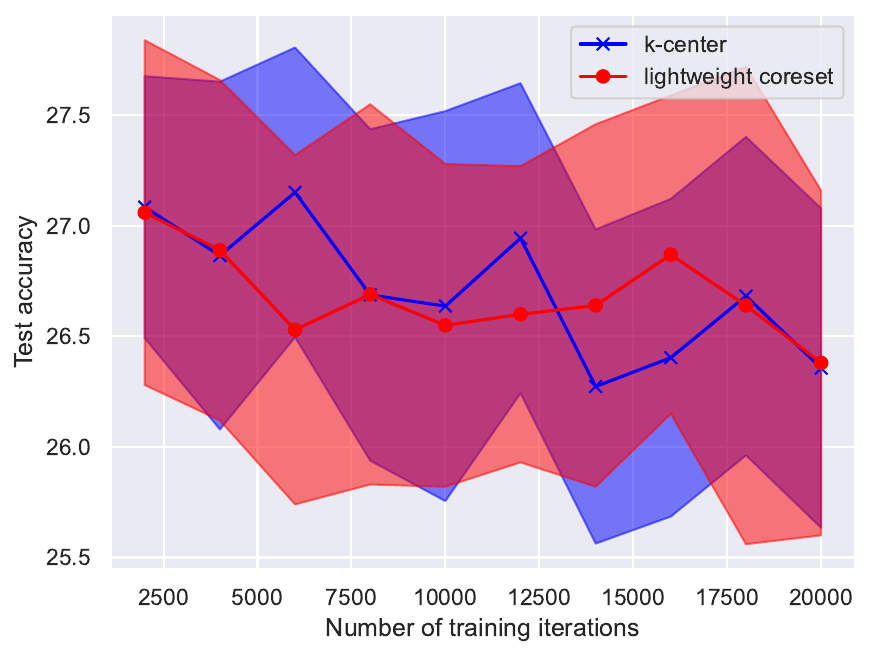}
    \caption{1 images per class}
    \label{fig:1ipcDM}
    \end{subfigure}
    \begin{subfigure}[b]{0.33\textwidth}
        \centering
    \includegraphics[width=\textwidth]{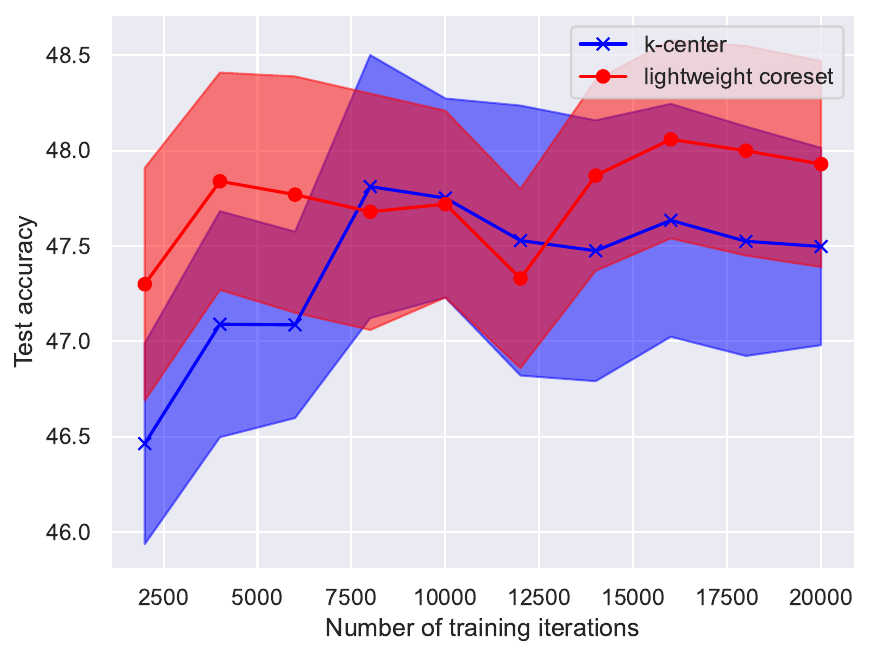}
    \caption{10 images per class}
    \label{fig:10ipcDM}
    \end{subfigure}
    \begin{subfigure}[b]{0.33\textwidth}
        \centering
    \includegraphics[width=\textwidth]{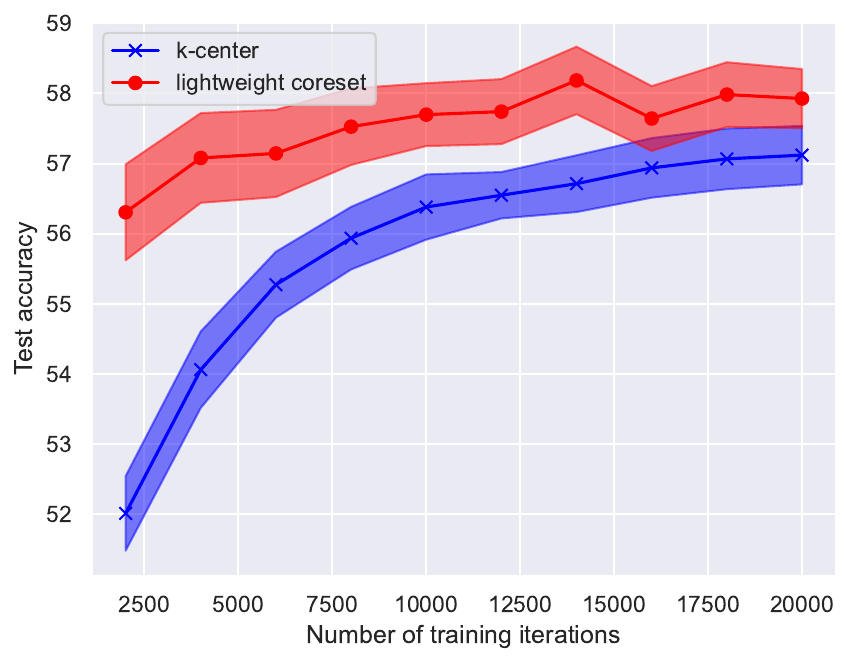}
    \caption{50 images per class}
    \label{fig:50ipcDM}
    \end{subfigure}
    \caption{Comparison between $K$-center initialization and a lightweight coreset for $K$-means}
    \label{fig:smartInitResults}
\end{figure}

\section{Conclusions and future work}
\label{sec:conclusion}
This paper provides a mechanism for enhancing the quality of various distillation techniques by leveraging key observations from data subset selection into the world of dataset distillation, both by (i) providing provable initialization techniques, and (ii) introducing smarter sampling methodology to better grasp the underlying structure of the data that the distillation method aims to achieve. 
Our experiments confirm that the choice of sampling methodology and initial set that the distillation uses is crucial as it leads to better test accuracy using sophisticated sampling methodologies.
we hope that our work will be regarded as a foundation stone in the pursuit of sophisticated sampling techniques and initialization methods for enhancing the quality of distillation methods or learning in general.

\bibliographystyle{plain}
\bibliography{main}

\begin{thebibliography}{10}

\bibitem{bachem2018scalable}
Olivier Bachem, Mario Lucic, and Andreas Krause.
\newblock Scalable k-means clustering via lightweight coresets.
\newblock In {\em Proceedings of the 24th ACM SIGKDD International Conference
  on Knowledge Discovery \& Data Mining}, pages 1119--1127, 2018.

\bibitem{pmlr-v84-bachem18a}
Olivier Bachem, Mario Lucic, and Silvio Lattanzi.
\newblock One-shot coresets: The case of k-clustering.
\newblock In Amos Storkey and Fernando Perez-Cruz, editors, {\em Proceedings of
  the Twenty-First International Conference on Artificial Intelligence and
  Statistics}, volume~84 of {\em Proceedings of Machine Learning Research},
  pages 784--792, Playa Blanca, Lanzarote, Canary Islands, 09--11 Apr 2018.
  PMLR.

\bibitem{buadoiu2008optimal}
Mihai B{\u{a}}doiu and Kenneth~L Clarkson.
\newblock Optimal core-sets for balls.
\newblock {\em Computational Geometry}, 40(1):14--22, 2008.

\bibitem{balcan2013distributed}
Maria-Florina~F Balcan, Steven Ehrlich, and Yingyu Liang.
\newblock Distributed $ k $-means and $ k $-median clustering on general
  topologies.
\newblock In {\em Advances in Neural Information Processing Systems}, pages
  1995--2003, 2013.

\bibitem{bochner1933monotone}
Salomon Bochner.
\newblock Monotone funktionen, stieltjessche integrale und harmonische analyse.
\newblock {\em Mathematische Annalen}, 108(1):378--410, 1933.

\bibitem{braverman2016new}
Vladimir Braverman, Dan Feldman, Harry Lang, Adiel Statman, and Samson Zhou.
\newblock New frameworks for offline and streaming coreset constructions.
\newblock {\em arXiv preprint arXiv:1612.00889}, 2016.

\bibitem{braverman2021coresets}
Vladimir Braverman, Shaofeng Jiang, Robert Krauthgamer, and Xuan Wu.
\newblock Coresets for clustering with missing values.
\newblock {\em Advances in Neural Information Processing Systems},
  34:17360--17372, 2021.

\bibitem{NEURIPS2021_90fd4f88}
Vladimir Braverman, Shaofeng Jiang, Robert Krauthgamer, and Xuan Wu.
\newblock Coresets for clustering with missing values.
\newblock In M.~Ranzato, A.~Beygelzimer, Y.~Dauphin, P.S. Liang, and J.~Wortman
  Vaughan, editors, {\em Advances in Neural Information Processing Systems},
  volume~34, pages 17360--17372. Curran Associates, Inc., 2021.

\bibitem{pmlr-v97-braverman19a}
Vladimir Braverman, Shaofeng H.-C. Jiang, Robert Krauthgamer, and Xuan Wu.
\newblock Coresets for ordered weighted clustering.
\newblock In Kamalika Chaudhuri and Ruslan Salakhutdinov, editors, {\em
  Proceedings of the 36th International Conference on Machine Learning},
  volume~97 of {\em Proceedings of Machine Learning Research}, pages 744--753,
  Long Beach, California, USA, 09--15 Jun 2019. PMLR.

\bibitem{cazenavette2022dataset}
George Cazenavette, Tongzhou Wang, Antonio Torralba, Alexei~A Efros, and
  Jun-Yan Zhu.
\newblock Dataset distillation by matching training trajectories.
\newblock In {\em Proceedings of the IEEE/CVF Conference on Computer Vision and
  Pattern Recognition}, pages 4750--4759, 2022.

\bibitem{cui2022dc}
Justin Cui, Ruochen Wang, Si~Si, and Cho-Jui Hsieh.
\newblock Dc-bench: Dataset condensation benchmark.
\newblock {\em arXiv preprint arXiv:2207.09639}, 2022.

\bibitem{curtain2019coresets}
Ryan Curtain, Sungjin Im, Ben Moseley, Kirk Pruhs, and Alireza Samadian.
\newblock On coresets for regularized loss minimization.
\newblock {\em arXiv preprint arXiv:1905.10845}, 2019.

\bibitem{deng2012mnist}
Li~Deng.
\newblock The mnist database of handwritten digit images for machine learning
  research.
\newblock {\em IEEE Signal Processing Magazine}, 29(6):141--142, 2012.

\bibitem{feldman2020introduction}
Dan Feldman.
\newblock Introduction to core-sets: an updated survey.
\newblock {\em arXiv preprint arXiv:2011.09384}, 2020.

\bibitem{goodfellow2016deep}
Ian Goodfellow, Yoshua Bengio, and Aaron Courville.
\newblock {\em Deep learning}.
\newblock MIT press, 2016.

\bibitem{harris2020array}
Charles~R. Harris, K.~Jarrod Millman, St{\'{e}}fan~J. van~der Walt, Ralf
  Gommers, Pauli Virtanen, David Cournapeau, Eric Wieser, Julian Taylor,
  Sebastian Berg, Nathaniel~J. Smith, Robert Kern, Matti Picus, Stephan Hoyer,
  Marten~H. van Kerkwijk, Matthew Brett, Allan Haldane, Jaime~Fern{\'{a}}ndez
  del R{\'{i}}o, Mark Wiebe, Pearu Peterson, Pierre G{\'{e}}rard-Marchant,
  Kevin Sheppard, Tyler Reddy, Warren Weckesser, Hameer Abbasi, Christoph
  Gohlke, and Travis~E. Oliphant.
\newblock Array programming with {NumPy}.
\newblock {\em Nature}, 585(7825):357--362, September 2020.

\bibitem{jacot2018neural}
Arthur Jacot, Franck Gabriel, and Cl{\'e}ment Hongler.
\newblock Neural tangent kernel: Convergence and generalization in neural
  networks.
\newblock {\em Advances in neural information processing systems}, 31, 2018.

\bibitem{jubran2019introduction}
Ibrahim Jubran, Alaa Maalouf, and Dan Feldman.
\newblock Introduction to coresets: Accurate coresets.
\newblock {\em arXiv preprint arXiv:1910.08707}, 2019.

\bibitem{jubran2020sets}
Ibrahim Jubran, Murad Tukan, Alaa Maalouf, and Dan Feldman.
\newblock Sets clustering.
\newblock In {\em International Conference on Machine Learning}, pages
  4994--5005. PMLR, 2020.

\bibitem{5980479}
Sertac Karaman, Matthew~R. Walter, Alejandro Perez, Emilio Frazzoli, and Seth
  Teller.
\newblock Anytime motion planning using the rrt*.
\newblock In {\em 2011 IEEE International Conference on Robotics and
  Automation}, pages 1478--1483, 2011.

\bibitem{killamsetty2021grad}
Krishnateja Killamsetty, Sivasubramanian Durga, Ganesh Ramakrishnan, Abir De,
  and Rishabh Iyer.
\newblock Grad-match: Gradient matching based data subset selection for
  efficient deep model training.
\newblock In {\em International Conference on Machine Learning}, pages
  5464--5474. PMLR, 2021.

\bibitem{killamsetty2021glister}
KrishnaTeja Killamsetty, Durga Sivasubramanian, Ganesh Ramakrishnan, and
  Rishabh~K. Iyer.
\newblock {GLISTER:} generalization based data subset selection for efficient
  and robust learning.
\newblock In {\em Thirty-Fifth {AAAI} Conference on Artificial Intelligence,
  {AAAI}}, 2021.

\bibitem{krizhevsky2009learning}
Alex Krizhevsky, Geoffrey Hinton, et~al.
\newblock Learning multiple layers of features from tiny images.
\newblock 2009.

\bibitem{lee2017deep}
Jaehoon Lee, Yasaman Bahri, Roman Novak, Samuel~S Schoenholz, Jeffrey
  Pennington, and Jascha Sohl-Dickstein.
\newblock Deep neural networks as gaussian processes.
\newblock {\em arXiv preprint arXiv:1711.00165}, 2017.

\bibitem{loo2022efficient}
Noel Loo, Ramin Hasani, Alexander Amini, and Daniela Rus.
\newblock Efficient dataset distillation using random feature approximation.
\newblock {\em arXiv preprint arXiv:2210.12067}, 2022.

\bibitem{maalouf2022unified}
Alaa Maalouf, Gilad Eini, Ben Mussay, Dan Feldman, and Margarita Osadchy.
\newblock A unified approach to coreset learning.
\newblock {\em IEEE Transactions on Neural Networks and Learning Systems},
  2022.

\bibitem{maalouf2019fast}
Alaa Maalouf, Ibrahim Jubran, and Dan Feldman.
\newblock Fast and accurate least-mean-squares solvers.
\newblock {\em Advances in Neural Information Processing Systems}, 32, 2019.

\bibitem{maalouf2022fast}
Alaa Maalouf, Ibrahim Jubran, and Dan Feldman.
\newblock Fast and accurate least-mean-squares solvers for high dimensional
  data.
\newblock {\em IEEE Transactions on Pattern Analysis and Machine Intelligence},
  44(12):9977--9994, 2022.

\bibitem{maalouf2021coresets}
Alaa Maalouf, Ibrahim Jubran, Murad Tukan, and Dan Feldman.
\newblock Coresets for the average case error for finite query sets.
\newblock {\em Sensors}, 21(19):6689, 2021.

\bibitem{maalouf2023autocoreset}
Alaa Maalouf, Murad Tukan, Vladimir Braverman, and Daniela Rus.
\newblock Autocoreset: An automatic practical coreset construction framework.
\newblock {\em arXiv preprint arXiv:2305.11980}, 2023.

\bibitem{maalouf2022coresets}
Alaa Maalouf, Murad Tukan, Eric Price, Daniel~M Kane, and Dan Feldman.
\newblock Coresets for data discretization and sine wave fitting.
\newblock In {\em International Conference on Artificial Intelligence and
  Statistics}, pages 10622--10639. PMLR, 2022.

\bibitem{mirzasoleiman2020coresets}
Baharan Mirzasoleiman, Jeff~A. Bilmes, and Jure Leskovec.
\newblock Coresets for data-efficient training of machine learning models.
\newblock In {\em Proceedings of the 37th International Conference on Machine
  Learning, {ICML}}, pages 6950--6960, 2020.

\bibitem{mussay2021data}
Ben Mussay, Dan Feldman, Samson Zhou, Vladimir Braverman, and Margarita
  Osadchy.
\newblock Data-independent structured pruning of neural networks via coresets.
\newblock {\em IEEE Transactions on Neural Networks and Learning Systems},
  33(12):7829--7841, 2021.

\bibitem{netzer2011reading}
Yuval Netzer, Tao Wang, Adam Coates, Alessandro Bissacco, Bo~Wu, and Andrew~Y
  Ng.
\newblock Reading digits in natural images with unsupervised feature learning.
\newblock 2011.

\bibitem{nguyen2020dataset}
Timothy Nguyen, Zhourong Chen, and Jaehoon Lee.
\newblock Dataset meta-learning from kernel ridge-regression.
\newblock {\em arXiv preprint arXiv:2011.00050}, 2020.

\bibitem{nguyen2021dataset}
Timothy Nguyen, Roman Novak, Lechao Xiao, and Jaehoon Lee.
\newblock Dataset distillation with infinitely wide convolutional networks.
\newblock {\em Advances in Neural Information Processing Systems},
  34:5186--5198, 2021.

\bibitem{NEURIPS2019_9015}
Adam Paszke, Sam Gross, Francisco Massa, Adam Lerer, James Bradbury, Gregory
  Chanan, Trevor Killeen, Zeming Lin, Natalia Gimelshein, Luca Antiga, Alban
  Desmaison, Andreas Kopf, Edward Yang, Zachary DeVito, Martin Raison, Alykhan
  Tejani, Sasank Chilamkurthy, Benoit Steiner, Lu~Fang, Junjie Bai, and Soumith
  Chintala.
\newblock Pytorch: An imperative style, high-performance deep learning library.
\newblock In {\em Advances in Neural Information Processing Systems 32}, pages
  8024--8035. Curran Associates, Inc., 2019.

\bibitem{paul2021deep}
Mansheej Paul, Surya Ganguli, and Gintare~Karolina Dziugaite.
\newblock Deep learning on a data diet: Finding important examples early in
  training.
\newblock {\em Advances in Neural Information Processing Systems},
  34:20596--20607, 2021.

\bibitem{scikitlearn}
F.~Pedregosa, G.~Varoquaux, A.~Gramfort, V.~Michel, B.~Thirion, O.~Grisel,
  M.~Blondel, P.~Prettenhofer, R.~Weiss, V.~Dubourg, J.~Vanderplas, A.~Passos,
  D.~Cournapeau, M.~Brucher, M.~Perrot, and E.~Duchesnay.
\newblock Scikit-learn: Machine learning in {P}ython.
\newblock {\em Journal of Machine Learning Research}, 12:2825--2830, 2011.

\bibitem{rahimi2007random}
Ali Rahimi and Benjamin Recht.
\newblock Random features for large-scale kernel machines.
\newblock {\em Advances in neural information processing systems}, 20, 2007.

\bibitem{NIPS2014_bca82e41}
Guy Rosman, Mikhail Volkov, Dan Feldman, John~W Fisher~III, and Daniela Rus.
\newblock Coresets for k-segmentation of streaming data.
\newblock In Z.~Ghahramani, M.~Welling, C.~Cortes, N.~Lawrence, and K.Q.
  Weinberger, editors, {\em Advances in Neural Information Processing Systems},
  volume~27. Curran Associates, Inc., 2014.

\bibitem{sener2017active}
Ozan Sener and Silvio Savarese.
\newblock Active learning for convolutional neural networks: A core-set
  approach.
\newblock {\em arXiv preprint arXiv:1708.00489}, 2017.

\bibitem{szegedy2015going}
Christian Szegedy, Wei Liu, Yangqing Jia, Pierre Sermanet, Scott Reed, Dragomir
  Anguelov, Dumitru Erhan, Vincent Vanhoucke, and Andrew Rabinovich.
\newblock Going deeper with convolutions.
\newblock In {\em Proceedings of the IEEE conference on computer vision and
  pattern recognition}, pages 1--9, 2015.

\bibitem{theodoridis2006pattern}
Sergios Theodoridis and Konstantinos Koutroumbas.
\newblock {\em Pattern recognition}.
\newblock Elsevier, 2006.

\bibitem{DBLP:journals/corr/abs-2301-04216}
Murad Tukan, Eli Biton, and Roee Diamant.
\newblock An efficient drifters deployment strategy to evaluate water current
  velocity fields.
\newblock {\em CoRR}, abs/2301.04216, 2023.

\bibitem{tukan2020coresets}
Murad Tukan, Alaa Maalouf, and Dan Feldman.
\newblock Coresets for near-convex functions.
\newblock {\em Advances in Neural Information Processing Systems},
  33:997--1009, 2020.

\bibitem{9981428}
Murad Tukan, Alaa Maalouf, Dan Feldman, and Roi Poranne.
\newblock Obstacle aware sampling for path planning.
\newblock In {\em 2022 IEEE/RSJ International Conference on Intelligent Robots
  and Systems (IROS)}, pages 13676--13683, 2022.

\bibitem{tukan2022pruning}
Murad Tukan, Loay Mualem, and Alaa Maalouf.
\newblock Pruning neural networks via coresets and convex geometry: Towards no
  assumptions.
\newblock {\em arXiv preprint arXiv:2209.08554}, 2022.

\bibitem{tukan2022new}
Murad Tukan, Xuan Wu, Samson Zhou, Vladimir Braverman, and Dan Feldman.
\newblock New coresets for projective clustering and applications.
\newblock In {\em International Conference on Artificial Intelligence and
  Statistics}, pages 5391--5415. PMLR, 2022.

\bibitem{tukan2023provable}
Murad Tukan, Samson Zhou, Alaa Maalouf, Daniela Rus, Vladimir Braverman, and
  Dan Feldman.
\newblock Provable data subset selection for efficient neural network training.
\newblock {\em arXiv preprint arXiv:2303.05151}, 2023.

\bibitem{python}
Guido Van~Rossum and Fred~L. Drake.
\newblock {\em Python 3 Reference Manual}.
\newblock CreateSpace, Scotts Valley, CA, 2009.

\bibitem{wang2018dataset}
Tongzhou Wang, Jun-Yan Zhu, Antonio Torralba, and Alexei~A Efros.
\newblock Dataset distillation.
\newblock {\em arXiv preprint arXiv:1811.10959}, 2018.

\bibitem{xiao2017fashion}
Han Xiao, Kashif Rasul, and Roland Vollgraf.
\newblock Fashion-mnist: a novel image dataset for benchmarking machine
  learning algorithms.
\newblock {\em arXiv preprint arXiv:1708.07747}, 2017.

\bibitem{yu2023dataset}
Ruonan Yu, Songhua Liu, and Xinchao Wang.
\newblock Dataset distillation: A comprehensive review.
\newblock {\em arXiv preprint arXiv:2301.07014}, 2023.

\bibitem{DSA}
Bo~Zhao and Hakan Bilen.
\newblock Dataset condensation with differentiable siamese augmentation.
\newblock In {\em International Conference on Machine Learning}, pages
  12674--12685. PMLR, 2021.

\bibitem{zhao2023dataset}
Bo~Zhao and Hakan Bilen.
\newblock Dataset condensation with distribution matching.
\newblock In {\em Proceedings of the IEEE/CVF Winter Conference on Applications
  of Computer Vision}, pages 6514--6523, 2023.

\bibitem{zhao2020dataset}
Bo~Zhao, Konda~Reddy Mopuri, and Hakan Bilen.
\newblock Dataset condensation with gradient matching.
\newblock {\em arXiv preprint arXiv:2006.05929}, 2020.

\end{thebibliography}

\appendix
\newpage
\section{Coreset tools}
\label{sec:coreset_tools}

We first define the term \emph{query space} which will aid us in simplifying the proofs as well as the corresponding theorems.

\begin{Definition}[Query space]
\label{def:querySpace}
Let $P$ be a set of $n\geq1$ points in $\REAL^d$, $w : P \to [0,\infty)$ be a non-negative weight function, and let $f: P \times X \to [0,\infty)$ denote a loss function. The tuple $(P, w, X, f)$ is called a query space.
\end{Definition}

\begin{definition}[VC-dimension~\cite{braverman2016new}]
\label{def:dimension}
For a query space $(P,w,X,f)$ and $r \in [0,\infty)$, we define 
\[
\RANGES(x,r) = \br{p \in P \mid w(p) f(p,x) \leq r},
\]
for every $x \in X$ and $r \geq 0$. The dimension of $(P, w, X, f)$ is the size $\abs{S}$ of the largest subset $S \subset P$ such that
\[
\abs{\br{S \cap \RANGES(x,r) \mid x \in X, r \geq 0 }} = 2^{\abs{S}},
\]
where $\abs{ A }$ denotes the number of points in $A$ for every $A \subseteq \REAL^d$.
\end{definition}

\begin{theorem}[Restatement of Theorem 5.5 in~\cite{braverman2016new}]
\label{thm:coresetbraverman}
%Let $P \subseteq \REAL^d$ be a set of $n$ points, and let $f : P \times \REAL^d \to [0, \infty)$ be a loss function. 
Let $(P,w)$ be a weighted set of $n$ points in $\REAL^d$, $\eps,\delta \in (0,1)$,  $X\subseteq \REAL^d$, and let $f : P \times X \to [0,\infty)$.
For every $p \in P$ define the \emph{sensitivity} of $p$ as
$
\sup_{x \in X} \frac{w(p)f(p,x)}{\sum_{q \in P} w(q)f(q,x)}
$, 
where the sup is over every $x \in X$ such that the denominator is non-zero.
Let $s: P \to [0,1]$ be a function such that $s(p)$ is an upper bound on the sensitivity of $p$.
Let $t = \sum_{p \in P} s(p)$ and $d'$ be the~\emph{VC dimension} of quadruple $(P,w,X,f)$; see Definition~\ref{def:dimension}. 
Let $c \geq 1$ be a sufficiently large constant, and let $S$ be a random sample of $\abs{S} \geq \frac{ct}{\varepsilon^2}\left(d'\log{(t)}+\log{(\frac{1}{\delta}})\right)$ 
i.i.d points from $P$, such that every $p \in P$ is sampled with probability $s(p)/t$. 
Let $v(p) = \frac{w(p)t}{s(p)\abs{S}}$ for every $p \in S$. 
Then, with probability at least $1-\delta$, $(S,v)$ is an $\varepsilon$-coreset for $(P,w,X,f)$.

\end{theorem}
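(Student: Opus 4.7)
The plan is to follow the classical sensitivity-sampling scheme of Braverman-Feldman-Lang: verify unbiasedness of the reweighted estimator, invoke a variance-sensitive concentration inequality for a single query $x$, and finally lift to uniform convergence over all $x \in X$ using the VC-dimension bound $d'$ on the range space from Definition~\ref{def:dimension}.

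First I would check unbiasedness. A single draw $p$ from $P$ with probability $s(p)/t$ gives
\[
\mathbb{E}\bigl[v(p)f(p,x)\bigr] \;=\; \sum_{p \in P} \tfrac{s(p)}{t} \cdot \tfrac{w(p)t}{s(p)|S|} f(p,x) \;=\; \tfrac{1}{|S|}\sum_{p \in P} w(p) f(p,x),
\]
so summing over the $|S|$ i.i.d.\ draws that make up $S$ yields $\mathbb{E}\bigl[\sum_{q \in S} v(q) f(q,x)\bigr] = \sum_{p \in P} w(p)f(p,x)$.

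Next, fix $x \in X$ with $\sum_p w(p) f(p,x) > 0$ and normalize by this quantity. For a single sample $p$, let $X_p := v(p) f(p,x) / \sum_{q} w(q) f(q,x)$. The sensitivity bound $s(p) \geq w(p)f(p,x)/\sum_q w(q) f(q,x)$ immediately implies $X_p \in [0, t/|S|]$, and hence $\mathbb{E}[X_p^2] \leq (t/|S|) \cdot \mathbb{E}[X_p] = t/|S|^2$. Bernstein's inequality applied to the sum of $|S|$ i.i.d.\ copies of $X_p$ then gives, for a single $x$,
\[
\Pr\!\left[\abs{1 - \tfrac{\sum_{q\in S} v(q) f(q,x)}{\sum_p w(p) f(p,x)}} > \eps\right] \;\leq\; 2 \exp\!\left(-\Omega\!\left(\tfrac{|S| \eps^2}{t}\right)\right),
\]
which already delivers the correct $|S| = \Omega(t \log(1/\delta)/\eps^2)$ dependence for a \emph{fixed} query.

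The main obstacle is extending this to hold uniformly for every $x \in X$, which is where the VC-dimension enters. I would invoke the VC-theory $\eps$-approximation theorem for range spaces (Haussler; or Li-Long-Srinivasan for real-valued functions) applied to the normalized function family $\br{p \mapsto w(p)f(p,x)/s(p)}_{x \in X}$, whose values lie in $[0, t]$. A layer-cake decomposition rewrites each such function as an integral over $r \geq 0$ of its super-level indicators $\mathbf{1}[w(p)f(p,x) \geq r]$, whose zero sets are precisely the ranges $\RANGES(x,r)$ in Definition~\ref{def:dimension}; the hypothesis that this range space has VC-dimension $d'$ therefore controls the combinatorial complexity of the induced indicator family. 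A uniform $\eps$-approximation of the sampling distribution $s/t$ for all such indicator sums with probability $1 - \delta$ costs exactly the promised $|S| \geq \tfrac{ct}{\eps^2}(d' \log t + \log(1/\delta))$ samples. The hard part is the careful layer-cake bookkeeping that converts a VC bound on sub-level sets into a multiplicative $(1\pm\eps)$ bound on the original real-valued weighted sums; once it is in place, substituting the reweighting $v(p) = w(p)t/(s(p)|S|)$ yields the $\eps$-coreset property of Definition~\ref{def:epsCore} and completes the proof.
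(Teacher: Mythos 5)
This theorem is not proved in the paper at all: it is stated as a verbatim restatement of Theorem~5.5 of~\cite{braverman2016new} and used as a black box, so there is no in-paper argument to compare yours against. Judged against the standard proof in that source, your outline follows the same route and the elementary parts are correct: the unbiasedness computation is right, and the single-query step is exactly the calculation that motivates the $t/\eps^2$ scaling --- the sensitivity bound forces each normalized summand $X_p$ into $[0, t/\abs{S}]$ with mean $1/\abs{S}$, so $\mathbb{E}[X_p^2] \leq t/\abs{S}^2$ and Bernstein gives failure probability $\exp\left(-\Omega(\abs{S}\eps^2/t)\right)$ for a fixed $x$.

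The substantive gap is that the uniform-convergence step, which is the entire content of the theorem and the only place $d'$ enters, is asserted rather than carried out, and the version of VC theory you would need is stronger than the one your sketch leans on. A plain $\eps$-approximation theorem for the range space of Definition~\ref{def:dimension} controls \emph{additive} error relative to the sampling measure $s/t$; since the functions $g_x(p) = \frac{w(p)f(p,x)}{s(p)\sum_{q\in P} w(q)f(q,x)}$ lie in $[0,1]$ but have mean exactly $1/t$ under the distribution $s/t$, an additive guarantee would have to be at scale $\eps/t$, which costs $\Theta(t^2/\eps^2)$ samples --- a factor $t$ worse than claimed. Recovering the stated bound $\frac{ct}{\eps^2}(d'\log t + \log\frac{1}{\delta})$ requires the \emph{relative} $(1\pm\eps)$-approximation theorem (Li--Long--Srinivasan / Har-Peled--Sharir, which you do name), combined with the layer-cake reduction of the real-valued family $\br{g_x}_{x\in X}$ to the ranges $\RANGES(x,r)$; this is precisely the ``careful bookkeeping'' your proposal defers, and it is where the $d'\log t$ term actually comes from. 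So the skeleton is the right one, but the step you leave as a black box is the theorem, not a routine verification.
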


\section{Random Fourier features}
\label{sec:fourier}

\begin{theorem}[Bochner's theorem \cite{bochner1933monotone}]
\label{thm:bochner}
A complex-valued function $g : \REAL^d \to \mathbb{C}$ is positive-definite if and only if it is the Fourier
Transform of a finite non-negative Borel measure $\mu$ on $\REAL^d$, i.e.,
\[
g(x) = \int\limits_{\REAL^d} \rho(\omega) e^{-ix^T\omega}d\mu(w), \quad \forall x\in \REAL^d 
\]
\end{theorem}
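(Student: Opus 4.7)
(Bochner).}

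The plan is to prove the two implications separately, with the ``if'' direction by direct computation and the ``only if'' direction by a regularization-and-weak-limit argument.

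\textbf{Easy direction (Fourier transforms of non-negative measures are positive-definite).} Assume $g(x) = \int_{\REAL^d} e^{-ix^T\omega}\,d\mu(\omega)$ with $\mu$ a finite non-negative Borel measure. For any finite collection of points $x_1,\ldots,x_n\in\REAL^d$ and complex scalars $c_1,\ldots,c_n\in\mathbb{C}$, I would compute
\begin{equation*}
\sum_{j,k=1}^{n} c_j \overline{c_k}\, g(x_j - x_k)
= \int_{\REAL^d} \sum_{j,k} c_j \overline{c_k}\, e^{-i(x_j-x_k)^T\omega}\,d\mu(\omega)
= \int_{\REAL^d} \Bigl|\sum_{j=1}^n c_j e^{-ix_j^T\omega}\Bigr|^2 d\mu(\omega) \geq 0,
\end{equation*}
which is exactly the definition of a positive-definite kernel via the Hermitian form $(c,c)\mapsto \sum c_j\overline{c_k}g(x_j-x_k)$. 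This uses only Fubini and non-negativity of $\mu$, so no subtlety is hidden here.

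\textbf{Hard direction (positive-definite $\Rightarrow$ Fourier transform of a measure).} Assume $g$ is continuous and positive-definite. First I would collect the standard elementary consequences: $g(0)\geq 0$, $g(-x)=\overline{g(x)}$, and $|g(x)|\leq g(0)$ for all $x$, obtained by evaluating the positive-definiteness inequality on the two-point sets $\{0,x\}$. These give a uniform bound that will be essential for tightness.

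Next, I would mollify $g$. For each $\eps>0$ define
\begin{equation*}
 g_\eps(x) \;=\; g(x)\,e^{-\eps\|x\|^2/2}.
\end{equation*}
The Gaussian factor is itself positive-definite (its Fourier transform is a positive Gaussian density), and pointwise products of positive-definite functions are positive-definite (Schur product applied to the Gram matrices), so $g_\eps$ is positive-definite, continuous, bounded, and now integrable. Hence its inverse Fourier transform
\begin{equation*}
 h_\eps(\omega) \;=\; (2\pi)^{-d}\!\int_{\REAL^d} g_\eps(x)\, e^{ix^T\omega}\,dx
\end{equation*}
is a bounded continuous function. The key claim is that $h_\eps\geq 0$. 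I would verify this by discretizing the positive-definiteness inequality: approximate $|\sum c_j e^{-ix_j^T\omega_0}|^2$ uniformly on compact sets and argue that for fixed $\omega_0$ a Riemann-sum version of $\int g_\eps(x)e^{ix^T\omega_0}dx$ is a limit of expressions of the form $\sum c_j\overline{c_k}g_\eps(x_j-x_k)\geq 0$, so the limit $h_\eps(\omega_0)$ is non-negative.

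Now I define measures $d\mu_\eps(\omega) = h_\eps(\omega)\,d\omega$. Evaluating the inverse Fourier relation at $x=0$ (after reversing it) gives $\int h_\eps\,d\omega = g_\eps(0) = g(0)$, so each $\mu_\eps$ is a finite non-negative Borel measure with total mass $g(0)$, uniformly bounded in $\eps$. To extract a weak-$\ast$ limit I would establish tightness: using continuity of $g$ at $0$, a Parseval/averaging identity shows that for any $R>0$ the measure of $\{\|\omega\|\geq R\}$ under $\mu_\eps$ can be controlled by an oscillation-type integral of $g$ near the origin that vanishes as $R\to\infty$ uniformly in $\eps$. Prokhorov's theorem then gives a weak-$\ast$ convergent subsequence $\mu_{\eps_n}\rightharpoonup \mu$, and passing to the limit in the identity $g_\eps(x)=\int e^{-ix^T\omega}d\mu_\eps(\omega)$ (the integrand is bounded continuous in $\omega$) yields $g(x)=\int e^{-ix^T\omega}d\mu(\omega)$ for every $x$.

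\textbf{Main obstacle.} The delicate point is the tightness of $\{\mu_\eps\}$: without it, the weak limit could leak mass to infinity and we would recover only a sub-probability version of $g$. The trick I would use is to average the identity $g_\eps(x)-g_\eps(0)=\int(e^{-ix^T\omega}-1)d\mu_\eps(\omega)$ over a small ball in $x$ to obtain, via Fubini, an expression of the form $\int (1-\Phi_r(\omega))\,d\mu_\eps(\omega)$ where $\Phi_r(\omega)\to 0$ as $\|\omega\|\to\infty$ for fixed $r>0$; combining this with continuity of $g$ at the origin gives the uniform tail bound. Everything else (Schur product, Riesz representation via Fubini, weak-$\ast$ convergence of bounded continuous test functions) is standard once tightness is in hand.
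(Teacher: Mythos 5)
The paper does not prove this statement at all: Theorem~\ref{thm:bochner} is imported verbatim as a classical result of Bochner~\cite{bochner1933monotone} and is used only as a black box to justify that the Fourier transform of a suitably scaled Gaussian kernel is a probability density (feeding into Theorem~\ref{thm:rahimi}). So there is no in-paper proof to compare against; what you have written is the standard textbook proof, and in outline it is correct. Your easy direction (Fubini plus $\bigl|\sum_j c_j e^{-ix_j^T\omega}\bigr|^2\geq 0$) is exactly right. Your hard direction is the classical regularization argument: Gaussian damping plus the Schur product theorem to keep positive-definiteness, non-negativity of the mollified inverse transform via Riemann-sum approximation of the positive-definiteness inequality, and a tightness/Prokhorov passage to the limit. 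Two caveats are worth recording. First, the ``only if'' direction genuinely requires $g$ to be continuous (at least at the origin) --- you correctly insert this hypothesis, but note that the paper's statement omits it, and without it the claim is false (e.g., the indicator of $\{0\}$ is positive-definite but is not the Fourier transform of any finite measure, since such transforms are continuous). Second, your step ``$\int h_\eps\,d\omega = g_\eps(0)$'' presupposes Fourier inversion, which needs $h_\eps\in L^1$; this is not automatic from boundedness but follows from $h_\eps\geq 0$ together with a monotone-convergence argument against a Gaussian approximate identity (test $h_\eps$ against $e^{-\delta\|\omega\|^2}$ and let $\delta\to 0$). With that step made explicit, the argument closes. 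Your identification of tightness as the main obstacle, and the averaging-over-a-small-ball device to control the tails of $\mu_\eps$ uniformly in $\eps$, is the right resolution; an equivalent alternative is to take a vague (Helly) limit and use continuity of $g$ at $0$ to rule out escape of mass.
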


With the above in mind, if the Gaussian kernel is adequately scaled, Theorem~\ref{thm:bochner} guarantees that its Fourier
transforms $\rho(\omega)$ is a proper probability distribution~\cite{rahimi2007random}. This observation gave birth to the following celebrated result of~\cite{rahimi2007random}. 

We now present the main tool which is used in this paper for approximating the feature map that our Gaussian kernel aims to represent.

\begin{theorem}[Special case of Claim 1 from~\cite{rahimi2007random}]
\label{thm:rahimi}
Let $R > 0$ be a positive real number, and let $X \subseteq \REAL^d$ be a set of queries with diameter $R$. Let $k: X \times X \to \REAL$ be the Gaussian kernel, $\sigma_\rho$ be the second moment of the Fourier transform of $k$, and let $\eps \in (0,1)$. Then for $D \in \Omega\term{\frac{d}{\eps^2} \log{\term{\frac{\sigma_\rho R}{\eps}}}}$, there exists $z : M \to \REAL^D$ such that 
$\sup_{x,y \in X} \abs{z(x)^Tz(y) - k(x,y)} \leq \eps,$ with constant probability.
\end{theorem}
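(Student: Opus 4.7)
The plan is to follow the classical Rahimi--Recht construction, splitting the argument into (a) an unbiased random feature map, (b) pointwise concentration, and (c) an $\epsilon$-net promotion to a uniform bound over $X$. First, I would invoke Bochner's theorem (Theorem~\ref{thm:bochner}) to write the Gaussian kernel, which is shift-invariant and positive-definite after appropriate normalization, as $k(x,y) = \mathbb{E}_{\omega \sim \rho}[e^{-i(x-y)^T\omega}]$, where $\rho$ is a probability density on $\REAL^d$. Since $k$ is real-valued, the imaginary part vanishes and a trigonometric identity yields $k(x,y) = \mathbb{E}_{\omega \sim \rho,\, b \sim U[0,2\pi]}[2\cos(\omega^Tx + b)\cos(\omega^Ty + b)]$.

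Second, draw $\omega_1,\dots,\omega_D$ i.i.d.\ from $\rho$ and $b_1,\dots,b_D$ i.i.d.\ from $U[0,2\pi]$ and set $z(x)_i := \sqrt{2/D}\,\cos(\omega_i^Tx + b_i)$. Then $z(x)^Tz(y)$ is an empirical mean of $D$ i.i.d.\ random variables bounded in $[-2,2]$ whose common expectation equals $k(x,y)$. For any fixed pair $(x,y)$, Hoeffding's inequality gives $\Pr\bigl[\abs{z(x)^Tz(y) - k(x,y)} > \eps/2\bigr] \leq 2\exp(-cD\eps^2)$ for a universal constant $c > 0$.

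Third, to upgrade this pointwise estimate into a uniform one over $X \times X$, observe that both $k(x,y)$ and $z(x)^Tz(y)$ depend only on $x - y$, so it suffices to control the error over the difference set $\Delta := X - X$, which has diameter $2R$. Cover $\Delta$ by an $r$-net $\mathcal{N}$ of cardinality at most $(8R/r)^d$, apply Hoeffding and a union bound to get the pointwise estimate simultaneously on $\mathcal{N}$, and then control the off-net oscillation via Lipschitz bounds: $k$ is Lipschitz with constant $\sigma_\rho$ (this is where the second-moment hypothesis on the Fourier transform enters), while the random function $(x,y) \mapsto z(x)^Tz(y)$ is Lipschitz with constant at most $2\max_i \norm{\omega_i}_2$, whose expectation is again controlled by $\sigma_\rho$, so a Markov-type argument bounds it with constant probability.

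Finally, balancing the two error contributions by choosing $r \asymp \eps/\sigma_\rho$ forces a net of size $(\sigma_\rho R/\eps)^{O(d)}$; taking $D = \Omega\bigl(\tfrac{d}{\eps^2}\log(\sigma_\rho R/\eps)\bigr)$ makes the total failure probability (union bound over the net together with the Lipschitz event) small enough that a constant-probability success is achieved. The main obstacle is the third step: the pointwise Hoeffding bound is essentially automatic, but the uniform bound requires simultaneously controlling the net cardinality, the deterministic Lipschitz constant of $k$, and the random Lipschitz constant of the feature map, in such a way that the logarithmic factor absorbing the net size matches exactly the claimed $\log(\sigma_\rho R/\eps)$ dependence rather than a worse polynomial one.
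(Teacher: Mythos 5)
Your proposal is correct and follows the standard Rahimi--Recht argument; note that the paper itself does not prove this statement but imports it verbatim as a special case of Claim~1 of~\cite{rahimi2007random}, supplying only the ingredients you use, namely Bochner's theorem (Theorem~\ref{thm:bochner}) and the cosine feature map of Algorithm~\ref{alg:fourier}. Your three-step decomposition (unbiasedness of $z(x)^Tz(y)$ after averaging out the phase $b$, pointwise Hoeffding on an average of terms in $[-2,2]$, and an $\eps$-net over the difference set $X-X$ with Lipschitz control via the second moment $\sigma_\rho$ and Markov) is exactly how the cited claim is established, and the balancing $r\asymp\eps/\sigma_\rho$ recovers the stated $D\in\Omega\bigl(\tfrac{d}{\eps^2}\log(\sigma_\rho R/\eps)\bigr)$.
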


We now present for completeness the process needed to construct such a feature map. 

\paragraph{Overview of Algorithm~\ref{alg:fourier}.}
Given a positive-definite shift-invariant kernel function $k$, Algorithm~\ref{alg:fourier} outputs a feature map that aims to approximate the feature map that the kernel function $k$ aims to represent.
To construct such a feature map, first, Fourier transform is constructed (see Line~\ref{algFourier:line1}), followed by drawing $D$ samples i.i.d from $\REAL^d$ using the Fourier transform of $k$. We then draw $D$ i.i.d. samples from a uniform distribution $\left[ 0, 2\pi \right]$; see Lines~\ref{algFourier:line2}--\ref{algFourier:line3}.
Finally, at Line~\ref{algFourier:line4}, the feature map $z$ is constructed.

\begin{algorithm}[htb!]
\caption{$\fourier(k)$\label{alg:fourier}}
{\begin{tabbing}
\textbf{Input:} \quad\= A positive-definite shift-invariant kernel $k$.\\
\textbf{Output:} \> A randomized feature map $z : \REAL^d \to \REAL^D$ \\\>satisfying Theorem~\ref{thm:rahimi}.
\end{tabbing}}

Compute the Fourier transform $\rho$ of the kernel $k$\label{algFourier:line1}\\

Draw $D$ i.i.d. samples $\br{\omega_i \in \REAL^d}_{i=1}^D$ from $\rho$\label{algFourier:line2}\\

Draw $D$ i.i.d. samples $\br{b_i \in \REAL}_{i=1}^D$ from the uniform distribution $[0,2\pi]$\label{algFourier:line3}\\ 

\Return $z(x) := \sqrt{\frac{2}{D}} \begin{bmatrix} \cos\term{\omega_1^T x + b_1} \\ \cos\term{\omega_2^T x + b_2} \\ \vdots \\ \cos\term{\omega_D^T x + b_D}\end{bmatrix}$ \label{algFourier:line4}
\end{algorithm}

\section{Proof of our technical results}
\subsection{Proof of Theorem~\ref{thm:init}}
\label{sec:proof_of_init_theorem}

First, we present the following tool which we will use to bound our sensitivities with respect to the kernel fitting problem.

\begin{lemma}[Special case of Lemma 35,~\cite{tukan2020coresets}]
\label{lem:sens_bound}
Let $(\tilde{P},{w})$ be a weighted set of points in $\REAL^{c}$.
%and let $f : P \times \REAL^\tilde{d} \to [0,\infty)$ such that for every $p \in P$, $x \in \REAL^\tilde{d}$, %$f(p,x) = \abs{p^Tx}$.
There exists a diagonal matrix of $\Sigma \in [0, \infty)^{c \times c}$ of full rank  and an orthogonal matrix $V \in \REAL^{c \times c}$ such that every $x \in \REAL^{c}$, 
$\norm{\Sigma V^Tx}_2 \leq \sum\limits_{p \in \tilde{P}} w(p)\abs{p^Tx} \leq \sqrt{c} \norm{\Sigma V^Tx}_2.$ 
Define $U : \tilde{P} \to \REAL^{c}$ such that $U(p) = p\term{\Sigma V^T}^{-1}$ for every $p \in \tilde{P}$. Then,
\begin{enumerate*}[label=(\roman*)]
\item \label{claim:lzreg1} for every $p\in \tilde{P}$,  $\sup_{x \in \REAL^c} \frac{ w(p)\abs{p^Tx}}{\sum\limits_{p \in \tilde{P}} w(q)\abs{q^Tx}} \leq \norm{U(p)}_1$, 
\item \label{claim:lzreg2} and $\sum\limits_{p \in \tilde{P}} \sup_{x \in \REAL^c} \frac{ w(p)\abs{p^Tx}}{\sum\limits_{p \in \tilde{P}} w(q)\abs{q^Tx}} \leq c^{1.5}$.
\end{enumerate*}
The tuple $\term{U,\Sigma,V}$ is the $\ell_1$-SVD of $\tilde{P}$~\cite{tukan2020coresets}.
%i.e., 
\end{lemma}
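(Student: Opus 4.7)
The plan is to identify $N(x) := \sum_{p \in \tilde{P}} w(p)\abs{p^T x}$ as a seminorm on $\REAL^c$. After restricting to the linear span of $\br{p \in \tilde{P} : w(p) > 0}$, I may assume without loss of generality that $N$ is a genuine norm, so that its unit ball $K = \br{x \in \REAL^c : N(x) \leq 1}$ is a centrally symmetric convex body with nonempty interior. I would then invoke the L\"owner form of John's ellipsoid theorem: the minimum-volume ellipsoid containing $K$ has the shape $\br{x : \norm{Ax}_2 \leq 1}$ for some invertible $A \in \REAL^{c \times c}$, and satisfies $K \subseteq \br{x : \norm{Ax}_2 \leq 1} \subseteq \sqrt{c}\, K$. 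Translating these two inclusions via positive homogeneity gives the sandwich $\norm{Ax}_2 \leq N(x) \leq \sqrt{c}\, \norm{Ax}_2$. Writing $A = U_A \Sigma V^T$ via the SVD and dropping the orthogonal factor $U_A$ from the Euclidean norm yields $\norm{Ax}_2 = \norm{\Sigma V^T x}_2$, which produces the diagonal $\Sigma$ with non-negative entries (of full rank after the reduction, since $A$ is invertible) and orthogonal $V$ promised by the lemma.

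For part (i), I would substitute $y := \Sigma V^T x$, so $\norm{y}_2 \leq N(x)$ by the lower side of the sandwich. A direct computation rewrites $p^T x = p^T V \Sigma^{-1} y = U(p)^T y$, matching the lemma's definition $U(p) = p(\Sigma V^T)^{-1}$. H\"older's inequality then gives $\abs{U(p)^T y} \leq \norm{U(p)}_1 \norm{y}_\infty \leq \norm{U(p)}_1 \norm{y}_2$, and dividing by the denominator yields
\[
\frac{w(p)\abs{p^T x}}{\sum_{q \in \tilde{P}} w(q)\abs{q^T x}} \;\leq\; \frac{w(p)\norm{U(p)}_1 \norm{y}_2}{\norm{y}_2} \;=\; w(p)\norm{U(p)}_1,
\]
which implies the claimed per-point sensitivity bound $\norm{U(p)}_1$ under the usual normalization $w(p) \leq 1$ (and is strictly stronger in general).

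For part (ii), I would expand $\norm{U(p)}_1$ coordinate-wise and swap the order of summation:
\[
\sum_{p \in \tilde{P}} w(p)\norm{U(p)}_1 \;=\; \sum_{i=1}^{c} \sum_{p \in \tilde{P}} w(p)\abs{U(p)_i}.
\]
Fixing $i$ and plugging in the test query $x_i := V \Sigma^{-1} e_i$ makes $\Sigma V^T x_i = e_i$, so $\norm{\Sigma V^T x_i}_2 = 1$ and $U(p)^T e_i = U(p)_i$, which gives $\sum_{p \in \tilde{P}} w(p)\abs{U(p)_i} = N(x_i) \leq \sqrt{c}$ by the upper side of the sandwich. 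Summing over $i \in [c]$ produces the claimed $c \cdot \sqrt{c} = c^{3/2}$ bound on the sum of sensitivities. The main technical obstacle is invoking John's theorem in the correct (L\"owner) direction so that the factor $\sqrt{c}$ lands on the side needed to dominate $\sum_p w(p) \abs{p^T x}$ from above, and cleanly handling rank-deficiency when $\tilde{P}$ fails to span $\REAL^c$; I would address the latter by working on the span and extending $\Sigma, V$ trivially on the orthogonal complement. The remainder is elementary linear algebra and a single use of H\"older.
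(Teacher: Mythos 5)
Your proof is correct. Note that the paper itself gives no proof of this lemma --- it is imported as a special case of Lemma 35 of~\cite{tukan2020coresets} --- so there is no internal argument to compare against; your L\"owner-ellipsoid route (sandwich the unit ball of the seminorm $N(x)=\sum_{p\in\tilde P} w(p)\abs{p^Tx}$ between a minimum-volume ellipsoid and its $\sqrt{c}$-shrinkage, diagonalize $A=U_A\Sigma V^T$, substitute $y=\Sigma V^Tx$, and apply H\"older) is essentially the standard construction of the $\ell_1$-SVD underlying the cited result, and every step of your chain checks out. Two remarks. First, you are right to flag the weight factor: the argument naturally yields the per-point bound $w(p)\norm{U(p)}_1$ rather than $\norm{U(p)}_1$, and that is the version the paper actually uses (Line~\ref{algLine:sensBound} of Algorithm~\ref{alg:initAlg} sets $s(p)=w(p)\norm{U(z(p))}_1$); it is also the only version for which your part~(ii) computation --- summing $\sum_{p}w(p)\abs{U(p)_i}=N(V\Sigma^{-1}e_i)\le\sqrt{c}$ over $i\in[c]$ --- delivers the claimed total bound $c^{1.5}$. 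As literally stated, claim~(i) of the lemma needs $w(p)\le 1$, which does hold in the paper's application where $w\equiv 1/n$. Second, a small wrinkle in your degenerate-case reduction: extending $\Sigma$ by zeros on the orthogonal complement of $\mathrm{span}(\tilde P)$ would contradict the stated full-rank requirement, and indeed the lower inequality $\norm{\Sigma V^Tx}_2\le N(x)$ with full-rank $\Sigma$ forces $\tilde P$ to span $\REAL^c$; the clean fix is to state the lemma on the span. Neither point undermines your argument.
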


With the above in mind, we proceed to prove Theorem~\ref{thm:init}.

\init*

\begin{proof}
\textbf{Handling NTK-based distillation techniques.} First, let $z : X \to \REAL^D$ be the result of plugging $X:=X$, $k := k$ and $\eps:=\eps$ into Theorem~\ref{thm:rahimi}. Observe that for every $x,y \in X$,
\begin{equation*}
k(x,y) - \eps \leq z(x)^T z(y) \leq k(x,y) + \eps,
\end{equation*}
where both inequalities holds by Theorem~\ref{thm:rahimi}.

Thus, it holds that for every $x \in X$
\begin{equation}
\label{eq:concentric_bounds}
\frac{1}{n}\sum\limits_{ p \in P} k(p,x) - \eps \leq \frac{1}{n}\sum\limits_{p \in P} z(p)^T z(x) \leq \frac{1}{n}\sum\limits_{p \in P} k(p,x) + \eps,
\end{equation}
where the inequalities follows since $P \subseteq X$, and $\sum\limits_{p \in P} \frac{\eps}{n} = \eps$.

Notice that for every $x,y \in X$, $z(x)^Tz(y) \geq 0$ holds only for $\eps \leq \alpha\term{R}$. With this in mind, using Lemma~\ref{lem:sens_bound} results in
\[
s(z(p)) = \sup_{x \in X} \frac{z(p)^Tz(x)}{\sum\limits_{q \in P} z(q)^T z(x)} \leq \norm{U(z(p))}_1.
\]

By plugging the bound on sensitivities, $\eps:=\eps$, and $\delta:=\delta$ into Theorem~\ref{thm:coresetbraverman}, we obtain a subset $S \subseteq P$ and a weight function $v : S \to [0, \infty)$ such that for every $x \in X$,
\begin{equation}
\label{eq:sensZP}
\begin{split}
(1 - \eps) \frac{1}{n}\sum\limits_{p \in P} z(p)^Tz(x) &\leq \sum\limits_{p \in S} v(p) z(p)^Tz(x) \\
&\leq (1 + \eps) \frac{1}{n}\sum\limits_{p \in P} z(p)^Tz(x). 
\end{split}
\end{equation}

By combining~\eqref{eq:concentric_bounds} and~\eqref{eq:sensZP}, we obtain that 
\[
(1-\eps) \frac{1}{n} k(p,x) - 2\eps \leq \sum\limits_{q \in S} v(q) z(q)^T z(x),
\]
 and 
\[
\sum\limits_{q \in S} v(q) z(q)^T z(x) \leq (1+\eps) \frac{1}{n} \sum\limits_{p \in P}k(p,x) + 2\eps
\]

The theorem for NTK-based distillation techniques then follows by combining the above two inequalities with the assumption that $\eps \in (0, \alpha\term{R}]$.

\textbf{Handling \emph{NNLMDT}s.} The coreset guarantees hold by~\cite{bachem2018scalable}; see Theorem 2 of ~\cite{bachem2018scalable}.
\end{proof}

\subsection{Proof of Theorem~\ref{thm:robin_heart}}
\robinheart*
\begin{proof}
The sensitivity of each item in $P$ with respect to some distillation loss at some iteration of the distillation procedure is heavily correlated with the accuracy of prediction. The model accuracy is directly influenced by the distilled dataset that is used to update the model at the current iteration. %This is influenced by the distilled data at that iteration. 
So in a way, the optimization involved in the definition of the sensitivity (see Theorem~\ref{thm:coresetbraverman}) is easily solvable since $X$ contains a single element related to the chosen distillation technique.

Hence, setting the sensitivity of any instance of $p$ becomes $\frac{\phi\term{p, y(p),\tilde{P},\tilde{y}}}{\sum\limits_{q \in P} \phi\term{q, y(q),\tilde{P},\tilde{y}}}$. Note that this function is also used in Line~\ref{liine:distribution} of Algorithm~\ref{alg:smartPick}.

With the above sensitivities, we can compute a weak coreset using Theorem~\ref{thm:coresetbraverman}, i.e., the coreset's provable guarantees only hold 
with respect to the distillation loss at the specific time (iteration).
Observe that since we have only one query which is the current state of the model (that is being used in the distillation technique), then the total sensitivity is $1$, where also the VC dimension is also $1$ (see Definition~\ref{def:dimension}).

Thus by Theorem~\ref{thm:coresetbraverman}, it suffices to sample only $O\term{\frac{1}{\eps^2} \term{\log{\frac{1}{\delta}}}}$, and with this, the proof of Theorem~\ref{thm:robin_heart} is concluded.

\end{proof}

\section{Extensions}
\label{sec:extensions}
\textbf{From additive to multiplicative approximation.} We show that by tuning $\varepsilon$ in Theorem~\ref{thm:rahimi}, our coreset can be associated with only a multiplicative-factor approximation.
%We show that our thery holds for
\begin{remark}
\label{remark:init}
Notice that Theorem~\ref{thm:init} holds when the approximation error used to produce the random feature map is upper bounded by the lowest value the kernel function $k$ can get over the Cartesian product of $P$ and $X$. This indicates that our coreset size is inversely proportional to the square of such a term. However, such a term is usually pessimistic as our results indicate that using such coreset with small coreset sizes, yields more informative distilled datasets.
In addition, for $\eps$ satisfying the above, our coreset gives a multiplicative approximation.
\end{remark}

\paragraph{Shift-invariant kernels -- Supporting a larger family of kernels.} Our theory holds for a large family of kernels including the Gaussian kernel, Laplacian kernel, and the Cauchy kernel as shown in the following table.

\begin{table}[htb!]
    \caption{Example of shift-invariant kernels}
    \centering
    \begin{tabular}{|c|c|}
     Kernel name &  $k(x,y)$ \\
     \hline
     Gaussian & $e^{-\norm{x-y}_2^2}$\\
     Laplacian & $e^{-\norm{x-y}_1}$ \\ 
     Cauchy & $\Pi_{i=1}^d \frac{2}{1 + \abs{e_i^T \term{x-y}}^2}$
    \end{tabular}
    \label{tab:shift_invariant}
\end{table}

\section{Limitations}
\paragraph{Bound on $\eps$ -- Theorem~\ref{thm:init}.} Theorem~\ref{thm:init} states that for certain values of $\eps$, the provable guarantees of the coreset holds. This is indeed essential for ensuring that the kernel fitting problem can be reformulated as a $\ell_1$-regression problem. Without such an assumption, it is required to split the data $P$ into two sets which are defined by the sign of $z(p)^Tz(x)$, i.e., $P_{+}\term{x} = \br{p \mid| p \in P, z(p)^Tz(x) \geq 0}$, and $P_{-}\term{x} = P \setminus P_{+}\term{x}$. The problem with such an approach is the fact that for every $x \in X$, the content of these sets would change. Consequently, without our assumption, a bound on the sensitivity becomes as hard as solving the optimization problem that the definition of the sensitivity of each point $p \in P$ entails.

\paragraph{The bound with respect to the diameter of $X$.} While such an assumption seems to limit, it is essential for the existence of a coreset for the kernel fitting problem regardless of the sensitivity bounding technique. Without such an assumption, the coreset size would be bounded from below by $\Omega{n}$ using a similar derivation to that of Theorem 3.1~\cite{tukan2023provable}.
\end{document}